\documentclass[a4paper]{llncs}

\usepackage{alltt}
\usepackage{caption}
\usepackage{paralist}
\usepackage{nicefrac}
\usepackage{booktabs}
\usepackage{algorithm}
\usepackage{algorithmic}
\usepackage{xspace}
\usepackage{amsmath}
\usepackage{marvosym}
\usepackage{wasysym}
\usepackage{verbatim}
\usepackage{subfigure}
\usepackage{hyperref}
\usepackage{multirow}
\usepackage{cite}
\usepackage[capitalize]{cleveref}
\usepackage[per-mode=symbol,detect-all]{siunitx}
\usepackage{graphics}
\usepackage{amssymb}
\usepackage{wrapfig}
\usepackage{enumitem}
\usepackage[table]{xcolor}
\usepackage{pdfpages}
\usepackage{wrapfig}
\usepackage{colortbl}
\usepackage{graphicx}
\usepackage{caption}
\usepackage{subfigure}
\usepackage{tikz}
\usepackage{tikz-3dplot}
\usepackage{multirow}

\usepackage{tabularx,stackengine,collcell}
\let\endminwd\relax
\newcolumntype{L}[1]{>{\collectcell\xminwd l{#1}}l<{\endminwd\endcollectcell}}
\newcolumntype{C}[1]{>{\collectcell\xminwd c{#1}}c<{\endminwd\endcollectcell}}
\newcolumntype{R}[1]{>{\collectcell\xminwd r{#1}}r<{\endminwd\endcollectcell}}
\def\minwd#1#2#3\endminwd{\stackengine{0pt}{#3}{\rule{#2}{0pt}}{O}{#1}{F}{F}{L}}
\newcommand\xminwd[1]{\minwd#1}

\newcommand{\mysubsection}[1]{\medskip\noindent\textbf{#1}}

\definecolor{navy}{RGB}{0,0,128}

\renewcommand{\comment}[1]{}

\newcommand{\relu}{\text{ReLU}\xspace{}}


\newcommand{\sat}{\texttt{SAT}}
\newcommand{\unsat}{\texttt{UNSAT}}

\renewcommand{\phi}{\varphi}

\newcommand{\inc}{\texttt{inc}}
\newcommand{\dec}{\texttt{dec}}

\newcommand{\guard}{\mathcal{G}}

\newcommand{\realsat}{\texttt{RealSAT}}

\newcommand{\allvars}{\mathcal{X}}
\newcommand{\ub}{u}
\newcommand{\lb}{l}
\newcommand{\assignment}{\alpha{}}

\newcommand{\reluSet}{R}

\newcommand{\drule}[2]{
	\renewcommand{\arraystretch}{1.2}
	\(\begin{array}{c}
		#1 \\
		\hline
		#2
	\end{array}\)
}
\newcommand{\rulename}[1]{\ensuremath{\mathsf{#1}}\xspace}
\newcommand{\irulename}[2]{\ensuremath{\mathsf{#1}_{#2}}\xspace}

\newcommand{\failure}{\rulename{Failure}}

\newcommand{\realsuccess}{\rulename{RealSuccess}}

\newcommand{\reluSuccess}{\rulename{Success}}

\newcommand{\reluSplit}{\rulename{ReluSplit}}

\newcommand{\StepRefine}{\rulename{RefinementStep}}
\newcommand{\StepAbstract}{\rulename{AbstractionStep}}
\newcommand{\ApplyAbstraction}{\rulename{ApplyAbstraction}}
\newcommand{\Prune}[1]{\irulename{Prune}{#1}}

\newcommand{\arf}{$AR^4$}

\usepackage{tikz,ifthen,pgfplots}
\usetikzlibrary{arrows,trees,backgrounds,automata,shapes,decorations,plotmarks,fit,calc,positioning,shadows,chains}
\tikzstyle{every pin edge}=[<-,shorten <=1pt]
\tikzstyle{neuron}=[circle,fill=black!25,minimum size=17pt,inner sep=0pt]
\tikzstyle{input neuron}=[neuron, fill=green!50]
\tikzstyle{output neuron}=[neuron, fill=red!50]
\tikzstyle{hidden neuron}=[neuron, fill=blue!50]
\tikzstyle{merged neuron}=[neuron, fill=orange!50]
\tikzstyle{annot} = [text width=4em, text centered]
\tikzstyle{nnedge} = [-{stealth},shorten >=0.1cm, shorten <=0.05cm,line width=0.8pt,black]
\tikzstyle{startstop} = [rectangle, rounded corners, minimum width=1cm, minimum height=1cm,text centered, draw=black, fill=red!30]
\tikzstyle{io} = [trapezium, trapezium left angle=80, trapezium right angle=100, minimum width=0.8cm, minimum height=0.8cm, text centered, draw=black, fill=blue!30]
\tikzstyle{process} = [rectangle, minimum width=1cm, minimum height=1cm, text centered, text width=1.2cm, draw=black, fill=orange!30]
\tikzstyle{decision} = [diamond, minimum width=1cm, minimum height=1cm, text centered, draw=black, fill=green!30]
\tikzstyle{arrow} = [thick,->,>=stealth]

\usetikzlibrary{calc}

\renewcommand{\arraystretch}{1.2}

\begin{document}

\title{Neural Network Verification using \\Residual Reasoning}

\author{
  Yizhak Yisrael Elboher (\Letter)\orcidID{0000-0003-2309-3505}  \and
  Elazar Cohen \orcidID{0000-0002-2788-3574} \and
  Guy Katz \orcidID{0000-0001-5292-801X}
}

\institute{
  The Hebrew University of Jerusalem, Jerusalem, Israel \\
  \email\{yizhak.elboher, elazar.cohen1, g.katz\}@mail.huji.ac.il 
}

\maketitle	
\begin{abstract}
  With the increasing integration of neural networks as components in
  mission-critical systems, there is an increasing need to ensure that
  they satisfy various safety and liveness requirements. In recent
  years, numerous sound and complete verification methods have been
  proposed towards that end, but these typically suffer from severe
  scalability limitations. Recent work has proposed enhancing such
  verification techniques with abstraction-refinement capabilities,
  which have been shown to boost scalability: instead of verifying a
  large and complex network, the verifier constructs and then verifies
  a much smaller network, whose correctness implies the correctness of
  the original network. A shortcoming of such a scheme is that if
  verifying the smaller network fails, the verifier needs to perform a
  refinement step that increases the size of the network being
  verified, and then start verifying the new network from scratch ---
  effectively ``wasting'' its earlier work on verifying the smaller
  network.  In this paper, we present an enhancement to
  abstraction-based verification of neural networks, by using
  \emph{residual reasoning}: the process of utilizing information
  acquired when verifying an abstract network, in order to expedite
  the verification of a refined network. In essence, the method allows
  the verifier to store information about parts of the search space in
  which the refined network is guaranteed to behave correctly, and
  allows it to focus on areas where bugs might be discovered. We
  implemented our approach as an extension to the Marabou verifier,
  and obtained promising results.
\end{abstract}
\keywords{Neural Networks
	\and Verification
	\and Abstraction Refinement
	\and Residual Reasoning 
	\and Incremental Reasoning}

\section{Introduction}

In recent years, the use of deep neural networks
(DNNs)~\cite{GoodBengCour16} in critical components of diverse systems
has been gaining momentum. A few notable examples include the fields
of speech recognition~\cite{devlin-etal-2019-bert}, image
recognition~\cite{heZaReSu2015deep}, autonomous
driving~\cite{BoDeDwFiFlGoJaMoMuZhZhZhZi16}, and many others. The
reason for this unprecedented success is the ability of DNNs to
generalize from a small set of training data, and then correctly
handle previously unseen inputs.

Still, despite their success, neural networks suffer from various
reliability issues. First, they are completely dependent on the
training process, which may include data that is anecdotal, partial,
noisy, or biased~\cite{KiKiKiKiKi2019, SoKiPaShLe2020}; further, the
training process has inherent over-fitting
limitations~\cite{JPhCS1168b2022Y2019}; and finally, trained networks
suffer from susceptibility to adversarial
attacks, as well as from obscurity and lack of
explainability~\cite{ANGELOV2020185}.  Unless addressed, these
concerns, and others, are likely to limit the applicability of DNNs in
the coming years.

A promising approach for improving the reliability of DNN models is to
apply \emph{formal verification} techniques: automated and rigorous
techniques that can ensure that a DNN model adheres to a given
specification, in all possible corner cases~\cite{Katz2017Reluplex,
  HuKwWaWu17, GeMiDrTsChVe18, WaPeWhYaJa18}.
While sound and complete formal verification methods can certify that
DNNs are reliable, these methods can typically only tackle small or
medium-sized DNNs; and despite significant strides in recent years,
scalability remains a major issue~\cite{BaLiJo21}.
  
In order to improve the scalability of DNN verification, recent
studies have demonstrated the great potential of enhancing it with
abstraction-refinement techniques~\cite{ClGrJhLuVe00CEGAR, ElGoKa20,
  AsHaKrMu20, PrAf20}. The idea is to use a black-box DNN verifier,
and feed it a series of \emph{abstract networks} --- i.e., DNNs that
are significantly smaller than the original network being
verified. Because the complexity of DNN verification is exponential in
the size of the DNN in question~\cite{Katz2017Reluplex}, these queries
can be solved relatively quickly; and the abstract networks are
constructed so that their correctness implies the correctness of the
original, larger network. The downside of abstraction is that
sometimes, verifying the smaller network returns an inconclusive
result --- in which case, the abstract network is \emph{refined} and
made slightly larger, and the process is repeated. Is it well known
that the heuristics used for performing the abstraction and refinement
steps can have a significant impact on performance~\cite{ClGrJhLuVe00CEGAR,
  ElGoKa20}, and that poor heuristics can cause the
abstraction-refinement sequence of queries to take longer to dispatch
than the original query.

In this paper, we propose an extension that can improve the
performance of an abstraction-refinement verification scheme.
The idea is to use \textit{residual reasoning}~\cite{AzCoPa20}: an approach for
re-using information obtained in an early verification query, in order
to expedite a subsequent query. Presently, a verifier might verify an
abstract network $N_1$, obtain an inconclusive answer, and then 
 verify a refined network, $N_2$; and it will verify $N_2$
from scratch, as if it had never verified $N_1$. Using residual
reasoning, we seek to leverage the similarities between $N_1$ and
$N_2$ in order to identify large portions of the verification search
space that need not be explored, because we are guaranteed a-priori
that they contain no violations of the property being checked. 

More specifically, modern verifiers can be regarded as traversing a
large search tree. Each branching in the tree is caused by an
\emph{activation function} within the neural network, which can take
on multiple linear phases; and each branch corresponds to one of these
phases. We show that when a verifier traverses a branch of the search
tree and determines that no property violations occur therein, that
information can be used to deduce that no violation can exist in some
of the branches of the search tree traversed when verifying a refined
network. The advantages of this approach are clear: by curtailing the
search space, the verification process can be expedited
significantly. The disadvantage is that, unlike in other
abstraction-refinement based techniques, the verifier needs to be
instrumented, and cannot be used as a black box.

Our contributions in this paper are as follows:
\begin{inparaenum}[(i)]
\item we formally define our residual reasoning scheme, in a general
  way that preserves the soundness and completeness of the underlying
  verifier;
\item we specify how our approach can be used to extend the
  state-of-the-art Marabou DNN verification
  engine~\cite{Katz2019Marabou}; and
\item we implement  our approach, and  evaluate it  on the ACAS
  Xu set of benchmarks~\cite{JuLoBrOwKo2016}.
\end{inparaenum}
We regard this work as a step towards tapping into the great potential
of abstraction-refinement methods in the context of DNN verification.

The rest of the paper is organized as follows. In
Section~\ref{sec:preliminaries} we recap the necessary background on
DNNs and their verification. Next, in
Section~\ref{sec:residual_reasoning} we describe our general method
for residual reasoning; followed by a discussion of how our technique
can enhance a specific abstraction-refinement method, in
Section~\ref{sec:rr_instantiation}.  Sections~\ref{sec:ar4} is then
dedicated to explaining how our method can be applied using the
Marabou DNN verifier as a backend, followed by our evaluation of the
approach in Section~\ref{sec:evaluation}.  Related work is covered in
Section~\ref{sec:related-work}, and we conclude in
Section~\ref{sec:conclusion}.

\section{Background}\label{sec:preliminaries}
	
\mysubsection{Deep Neural Networks (DNNs).}
A neural network~\cite{GoodBengCour16}
$N:\mathbb{R}^n\rightarrow\mathbb{R}^m$ is a directed graph, organized
into an input layer, multiple hidden layers, and an output layer. Each
layer is a set of nodes (neurons), which can take on real values. When
an input vector is passed into the input layer, it can be used to
iteratively compute the values of neurons in the following layers, all
through to neurons in the output layer --- which constitute the
network's output.  We use $ L_i $ to denote the $ i $'th layer of the
DNN, and $ v_{i,j} $ to denote the $ j $'th node in $ L_i $.

\begin{wrapfigure}[14]{r}{5.0cm}
  \vspace{-1.2cm}
  \begin{center}
    \scalebox{0.7} {
    	\def\layersep{2cm}
    	\begin{tikzpicture}[shorten >=1pt,->,draw=black!50, node distance=\layersep,font=\footnotesize]
    		
    		\path[yshift=0cm] node[input neuron, label={[xshift=-0.4cm]\textcolor{purple}{0}}] (I-1) at (0,0) {$x_1$};
    		\path[yshift=0cm] node[input neuron, label={[xshift=-0.4cm]\textcolor{purple}{1}}] (I-2) at (0,-1) {$x_2$};
    		
    		\node[hidden neuron, label={[xshift=-0.4cm]\textcolor{purple}{0}}] (H-1)
    		at (\layersep,1 cm) {$v_{1,1}$};
    		\node[hidden neuron, label={[xshift=-0.4cm]\textcolor{purple}{0}}] (H-2)
    		at (\layersep,0 cm) {$v_{1,2}$};
    		\node[hidden neuron, label={[xshift=-0.4cm]\textcolor{purple}{2}}] (H-3)
    		at (\layersep,-1 cm) {$v_{1,3}$};
    		\node[hidden neuron, label={[xshift=-0.4cm]\textcolor{purple}{1}}] (H-4)
    		at (\layersep,-2 cm) {$v_{1,4}$};
    		
    		\node[hidden neuron, label={[xshift=-0.2cm]\textcolor{purple}{-2}}] (H-5)
    		at (2*\layersep,1.5 cm) {$v_{2,1}$};
    		\node[hidden neuron, label={[xshift=-0.2cm]\textcolor{purple}{-4}}] (H-6)
    		at (2*\layersep,0.5 cm) {$v_{2,2}$};
    		\node[hidden neuron, label={[xshift=-0.2cm]\textcolor{purple}{0}}] (H-7)
    		at (2*\layersep,-0.5 cm) {$v_{2,3}$};
    		\node[hidden neuron, label={[xshift=-0.2cm]\textcolor{purple}{8}}] (H-8)
    		at (2*\layersep,-1.5 cm) {$v_{2,4}$};
    		\node[hidden neuron, label={[xshift=-0.2cm]\textcolor{purple}{1}}] (H-9)
    		at (2*\layersep,-2.5 cm) {$v_{2,5}$};
    		
    		\node[output neuron, label={[xshift=-0.2cm]\textcolor{purple}{9}}] at (3*\layersep, -0.5) (O-1) {$y$};
    		
    		\draw[nnedge] (I-1) -- node[above] {$1$} (H-1);
    		\draw[nnedge] (I-1) -- node[above] {$2$} (H-2);
    		
    		\draw[nnedge] (I-2) -- node[above] {$2$} (H-3);
    		\draw[nnedge] (I-2) -- node[above] {$1$} (H-4);
    		
    		\draw[nnedge] (H-1) -- node[above] {$3$} (H-5);
    		\draw[nnedge] (H-1) -- node[above] {$2$} (H-6);
    		\draw[nnedge] (H-2) -- node[above, pos=0.2] {$1$} (H-7);
    		\draw[nnedge] (H-3) -- node[left, pos=0.1] {$-1$} (H-5);
    		\draw[nnedge] (H-3) -- node[below, pos=0.3] {$-2$} (H-6);
    		\draw[nnedge] (H-4) -- node[above] {$8$} (H-8);
    		\draw[nnedge] (H-4) -- node[below] {$1$} (H-9);
    		
    		\draw[nnedge] (H-5) -- node[above] {$1$} (O-1);
    		\draw[nnedge] (H-6) -- node[above] {$1$} (O-1);
    		\draw[nnedge] (H-7) -- node[above, pos=0.3] {$-4$} (O-1);
    		\draw[nnedge] (H-8) -- node[above] {$1$} (O-1);
    		\draw[nnedge] (H-9) -- node[above] {$1$} (O-1);						
    	\end{tikzpicture}
    }

  \end{center}
    \caption{A DNN with an input layer  (green), two hidden layers
      (blue), and an output layer (red). 
    }
  \label{fig:simple-neural-network}
\end{wrapfigure}
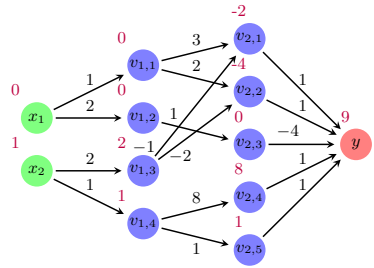
Typically, each neuron in the DNN is evaluated by first computing a
weighted sum of the values assigned to neurons in the preceding layer,
and then applying some activation function to the result. For
simplicity, we restrict our attention to the popular \relu{}
activation function~\cite{GoodBengCour16}, which is a piecewise-linear
function defined as $ \relu(x)=max(x,0) $.  When $x>0$, we say that
the \relu{} is active; and otherwise, we say that it is inactive. A
simple example appears in Fig.~\ref{fig:simple-neural-network}, and
shows a DNN evaluated on input $\langle 0,1\rangle$. The value above
each neuron is the weighted sum that it computes, prior to the
application of the \relu{} activation function. The network's output
is 9.

\mysubsection{Neural Network Verification.}  Neural network
verification~\cite{Liu2021AlgsVNN} deals with checking whether an
input-output relation in a neural network holds. A verification query
is a couple $ \langle N, \phi\rangle $, where $ N $ is a neural
network and $ \phi $ is a property of the form:
$ \vec{x}\in D_I \wedge \vec{y}\in D_O $, meaning that the
input $ \vec{x} $ is in some input domain $ D_I $ and the output
$ \vec{y} $ is in some output domain $ D_O $. Typically, $\phi$
represents \emph{undesirable} behavior; and so the verification
problem is to find an input $\vec{x}$ and its matching output
$\vec{y}$ that satisfy $ \phi $, and so constitute a counter-example
(the \sat{} case), or to prove that no such $\vec{x}$ exists (the
\unsat{} case). Without loss of generality, we assume that
verification queries only consists of a network $N$ with a single
output neuron $y$, and of a property $\varphi$ of the form 
$ \vec{x}\in D_I \wedge y>c$; other queries can be reduced to this
setting in a straightforward way~\cite{ElGoKa20}.

As a simple example, consider the DNN in
Fig~\ref{fig:simple-neural-network} and the property
$ \phi : x_1, x_2 \in [0,1] \wedge y > 14 $.  Checking whether input
$ x_1=0, x_2=1 $ satisfies this property, we get that it does not,
since $ y=9\le14 $. A sound verifier, therefore, would not return
$ \langle 0,1\rangle $ as a satisfying assignment for this
query.

\subsubsection{Linear Programming and Case Splitting.}
A key technique for DNN verification, which is nowadays used by many
leading verification tools, is called \emph{case
  splitting}~\cite{Katz2019Marabou, TjXiTe17, WaZhXuLiJaHsKo21}. A DNN
verification problem can be regarded as a satisfiability problem,
where linear constraints and \relu{} constraints must be satisfied
simultaneously; and while linear constraints are easy to
solve~\cite{Dantzig1963}, the \relu{}s render the problem
NP-Complete~\cite{Katz2017Reluplex}. In case splitting, the verifier
sometimes transforms a \relu{} constraint into an equivalent
disjunction of linear constraints:
\[
  (y=\relu{}(x)) \equiv \left((x\leq 0 \wedge y = 0)\vee (x\geq 0 \wedge y=x)\right)
\]
and then each time \emph{guesses} which of the two disjuncts holds,
and attempts to satisfy the resulting constraints. This approach gives
rise to a search tree, where internal nodes correspond to \relu{}
constraints, and their outgoing edges to the two linear constraints
each \relu{} can take. Each leaf of this tree is a problem that can be
solved directly, e.g., because all \relu{}s have been split
upon. These problems are often dispatched using linear programming
engines.

Case splitting might produce an exponential number of
sub-problems, and so solvers apply a myriad of heuristics to
avoid them or prioritize between them. Solvers also use deduction to
rule out a-priori case splits that cannot lead to a satisfying
assignment. Such techniques are beyond our scope.

\mysubsection{Abstraction-Refinement (AR).}
Abstraction-refinement is a common mechanism for improving the
performance of verification tools in various
domains~\cite{ClGrJhLuVe00CEGAR}, including in DNN
verification~\cite{ElGoKa20, AsHaKrMu20, PrAf20}.  A
sketch of the basic scheme of AR is illustrated in
Fig.~\ref{fig:AR-mechanism} in Appendix~\ref{appendix:CEGAR}. The process begins with a DNN $N$ and a
property $\phi$ to verify, and then \emph{abstracts} $N$ into a
different, smaller network $ N' $. A key property  is
that $N'$ \emph{over-approximates} $N$: if
$\langle N', \varphi\rangle$ is \unsat{}, then
$\langle N, \varphi\rangle$ is also \unsat{}. Thus, it is usually
preferable to verify the smaller $N'$ instead of $N$.

If a verifier determines that $\langle N', \varphi\rangle$ is \sat{},
it returns a counter-example $\vec{x_0}$. That counter-example is then
checked to determine whether it also constitutes a counterexample for
$\langle N, \varphi\rangle$. If so, the original query is \sat{}, and
we are done; but otherwise, $\vec{x_0}$ is a \emph{spurious}
counter-example, indicating that $N'$ is inadequate for determining the
satisfiability of the original query. We then apply \emph{refinement}:
we use $N'$, and usually also $\vec{x_0}$, to create a new network
$N''$, which is larger than $N'$ but is still an over-approximation of
$N$. The process is then repeated using $N''$. Usually, the process is
guaranteed to converge: either we are able to determine the
satisfiability of the original query using one of the abstract
networks, or we end up refining $N'$ all the way back to $N$, and
solve the original query, which, by definition, cannot return a
spurious result.

In this paper we focus on a particular abstraction-refinement
mechanism for DNN verification~\cite{ElGoKa20}. There,
abstraction and refinement are performed by merging or
splitting (respectively) neurons in the network, and
aggregating the weights of their incoming and outgoing edges.  This
merging and splitting is carried out in a specific way, which
guarantees that if $N$ is abstracted into $N'$, then for all input
$\vec{x}$ it holds that $N'(\vec{x})\geq N(\vec{x})$; and thus, if
$N'(\vec{x})\geq c$ is \unsat{}, then $N(\vec{x})\geq c$ is also
\unsat{}, as is required of an over-approximation.

An illustrative example appears in Fig.~\ref{fig:running-example}.
On the left, we have  the network from
Fig.~\ref{fig:simple-neural-network}, denoted $N$. The middle network,
denoted $N'$, is obtained by merging together neurons $v_{2,1}$ and
$v_{2,2}$ into the single neuron $v_{2,1+2}$; and by merging neurons
$v_{2,4}$ and $v_{2,5}$ into the single neuron $v_{2,4+5}$. The
weights on the outgoing edges of these neurons are the sums of the
outgoing edges of their original neurons; and the weights of the
incoming edges are either the $\min$ or $\max$ or the original
weights, depending on various criteria~\cite{ElGoKa20}. It can be
proven~\cite{ElGoKa20} that $N'$ over-approximates $N$; for example,
$N(\langle 3,1\rangle)=-6 < N'(\langle 3,1\rangle)= 6$.
Finally, the network on the right, denoted $N''$, is obtained from $N$
by splitting a previously merged neuron. $N''$ is larger than $N'$,
but it is still an over-approximation of the original $N$: for
example,
$N''(\langle 3,1\rangle)=1 > N(\langle 3,1\rangle) = -6$.

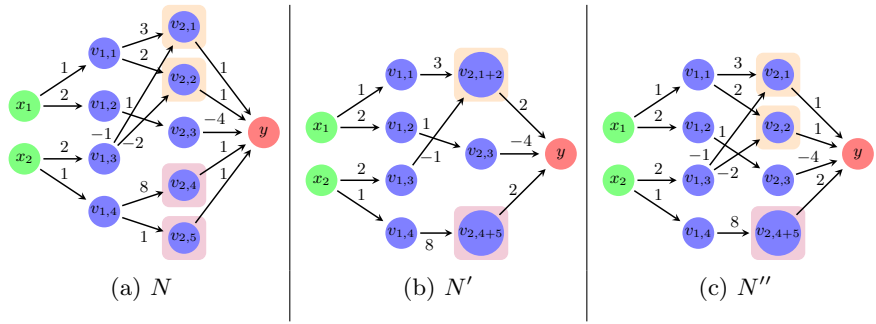
\begin{figure}
	\begin{tabular}{c|c|c}
		 \scalebox{0.7} {
		 	\def\layersep{1.5cm}
		 	\begin{tikzpicture}[shorten >=1pt,->,draw=black!50, node distance=\layersep,font=\footnotesize]
		 		
		 		\path[yshift=0cm] node[input neuron] (I-1) at (0,0) {$x_1$};
		 		\path[yshift=0cm] node[input neuron] (I-2) at (0,-1) {$x_2$};
		 		
		 		\node[hidden neuron] (H-1)
		 		at (\layersep,1 cm) {$v_{1,1}$};
		 		\node[hidden neuron] (H-2)
		 		at (\layersep,0 cm) {$v_{1,2}$};
		 		\node[hidden neuron] (H-3)
		 		at (\layersep,-1 cm) {$v_{1,3}$};
		 		\node[hidden neuron] (H-4)
		 		at (\layersep,-2 cm) {$v_{1,4}$};
		 		
		 		\node[hidden neuron] (H-5)
		 		at (2*\layersep,1.5 cm) {$v_{2,1}$};
		 		\node[hidden neuron] (H-6)
		 		at (2*\layersep,0.5 cm) {$v_{2,2}$};
		 		\node[hidden neuron] (H-7)
		 		at (2*\layersep,-0.5 cm) {$v_{2,3}$};
		 		\node[hidden neuron] (H-8)
		 		at (2*\layersep,-1.5 cm) {$v_{2,4}$};
		 		\node[hidden neuron] (H-9)
		 		at (2*\layersep,-2.5 cm) {$v_{2,5}$};
		 		
		 		\node[output neuron] at (3*\layersep, -0.5) (O-1) {$y$};
		 		
		 		\draw[nnedge] (I-1) -- node[above] {$1$} (H-1);
		 		\draw[nnedge] (I-1) -- node[above] {$2$} (H-2);
		 		
		 		\draw[nnedge] (I-2) -- node[above] {$2$} (H-3);
		 		\draw[nnedge] (I-2) -- node[above] {$1$} (H-4);
		 		
		 		\draw[nnedge] (H-1) -- node[above] {$3$} (H-5);
		 		\draw[nnedge] (H-1) -- node[above] {$2$} (H-6);
		 		\draw[nnedge] (H-2) -- node[above, pos=0.2] {$1$} (H-7);
		 		\draw[nnedge] (H-3) -- node[left, pos=0.1] {$-1$} (H-5);
		 		\draw[nnedge] (H-3) -- node[below, pos=0.3] {$-2$} (H-6);
		 		\draw[nnedge] (H-4) -- node[above] {$8$} (H-8);
		 		\draw[nnedge] (H-4) -- node[below] {$1$} (H-9);
		 		
		 		\draw[nnedge] (H-5) -- node[above] {$1$} (O-1);
		 		\draw[nnedge] (H-6) -- node[above] {$1$} (O-1);
		 		\draw[nnedge] (H-7) -- node[above, pos=0.3] {$-4$} (O-1);
		 		\draw[nnedge] (H-8) -- node[above] {$1$} (O-1);
		 		\draw[nnedge] (H-9) -- node[above] {$1$} (O-1);						
		 		
		 		\begin{pgfonlayer}{background}
		 			\newcommand*{\nodePadding}{0.2cm}
		 			\tikzstyle{background_rectangle}=[rounded corners, fill = navy!10]
		 			
		 			\draw[fill = orange!20, draw=none, rounded corners]
		 			($(H-5.south west) + (-\nodePadding, -\nodePadding ) $)
		 			rectangle
		 			($(H-5.north east) + ( \nodePadding, \nodePadding) $);
		 			
		 			\draw[fill = orange!20, draw=none, rounded corners]
		 			($(H-6.south west) + (-\nodePadding, -\nodePadding ) $)
		 			rectangle
		 			($(H-6.north east) + ( \nodePadding, \nodePadding) $);
		 			
		 			\draw[fill = purple!20, draw=none, rounded corners]
		 			($(H-8.south west) + (-\nodePadding, -\nodePadding ) $)
		 			rectangle
		 			($(H-8.north east) + ( \nodePadding, \nodePadding) $);
		 			
		 			\draw[fill = purple!20, draw=none, rounded corners]
		 			($(H-9.south west) + (-\nodePadding, -\nodePadding ) $)
		 			rectangle
		 			($(H-9.north east) + ( \nodePadding, \nodePadding) $);
		 			
		 		\end{pgfonlayer}
		 		
		 	\end{tikzpicture}
		 }
		 &  
		 \scalebox{0.7} {
		 	\def\layersep{1.5cm}
		 	\begin{tikzpicture}[shorten >=1pt,->,draw=black!50, node distance=\layersep,font=\footnotesize]
		 		
		 		\path[yshift=0cm] node[input neuron] (I-1) at (0,0) {$x_1$};
		 		\path[yshift=0cm] node[input neuron] (I-2) at (0,-1) {$x_2$};
		 		
		 		\node[hidden neuron] (H-1)
		 		at (\layersep,1 cm) {$v_{1,1}$};
		 		\node[hidden neuron] (H-2)
		 		at (\layersep,0 cm) {$v_{1,2}$};
		 		\node[hidden neuron] (H-3)
		 		at (\layersep,-1 cm) {$v_{1,3}$};
		 		\node[hidden neuron] (H-4)
		 		at (\layersep,-2 cm) {$v_{1,4}$};
		 		
		 		\node[hidden neuron] (H-5)
		 		at (2*\layersep,1 cm) {$v_{2,1+2}$};
		 		\node[hidden neuron] (H-6)
		 		at (2*\layersep,-0.5 cm) {$v_{2,3}$};
		 		\node[hidden neuron] (H-7)
		 		at (2*\layersep,-2 cm) {$v_{2,4+5}$};
		 		
		 		\node[output neuron] at (3*\layersep, -0.5) (O-1) {$y$};
		 		
		 		\draw[nnedge] (I-1) -- node[above] {$1$} (H-1);
		 		\draw[nnedge] (I-1) -- node[above] {$2$} (H-2);
		 		
		 		\draw[nnedge] (I-2) -- node[above] {$2$} (H-3);
		 		\draw[nnedge] (I-2) -- node[above] {$1$} (H-4);
		 		
		 		\draw[nnedge] (H-1) -- node[above] {$3$} (H-5);
		 		\draw[nnedge] (H-2) -- node[above, pos=0.2] {$1$} (H-6);
		 		\draw[nnedge] (H-3) -- node[below,
                                pos=0.3] {$\ -1$} (H-5);
		 		\draw[nnedge] (H-4) -- node[below, pos=0.3] {$8$} (H-7);
		 		
		 		\draw[nnedge] (H-5) -- node[above] {$2$} (O-1);
		 		\draw[nnedge] (H-6) -- node[above] {$-4$} (O-1);
		 		\draw[nnedge] (H-7) -- node[above, pos=0.3] {$2$} (O-1);
		 		
		 		\begin{pgfonlayer}{background}
		 			\newcommand*{\nodePadding}{0.2cm}
		 			\tikzstyle{background_rectangle}=[rounded corners, fill = navy!10]
		 			
		 			\draw[fill = orange!20, draw=none, rounded corners]
		 			($(H-5.south west) + (-\nodePadding, -\nodePadding ) $)
		 			rectangle
		 			($(H-5.north east) + ( \nodePadding, \nodePadding) $);
		 			
		 			\draw[fill = purple!20, draw=none, rounded corners]
		 			($(H-7.south west) + (-\nodePadding, -\nodePadding ) $)
		 			rectangle
		 			($(H-7.north east) + ( \nodePadding, \nodePadding) $);
		 			
		 		\end{pgfonlayer}
		 		
		 	\end{tikzpicture}
		 }
		 & 
		 \scalebox{0.7} {
		 	\def\layersep{1.5cm}
		 	\begin{tikzpicture}[shorten >=1pt,->,draw=black!50, node distance=\layersep,font=\footnotesize]
		 		
		 		\path[yshift=0cm] node[input neuron] (I-1) at (0,0) {$x_1$};
		 		\path[yshift=0cm] node[input neuron] (I-2) at (0,-1) {$x_2$};
		 		
		 		\node[hidden neuron] (H-1)
		 		at (\layersep,1 cm) {$v_{1,1}$};
		 		\node[hidden neuron] (H-2)
		 		at (\layersep,0 cm) {$v_{1,2}$};
		 		\node[hidden neuron] (H-3)
		 		at (\layersep,-1 cm) {$v_{1,3}$};
		 		\node[hidden neuron] (H-4)
		 		at (\layersep,-2 cm) {$v_{1,4}$};
		 		
		 		\node[hidden neuron] (H-5)
		 		at (2*\layersep,1 cm) {$v_{2,1}$};
		 		\node[hidden neuron] (H-6)
		 		at (2*\layersep,0 cm) {$v_{2,2}$};
		 		\node[hidden neuron] (H-7)
		 		at (2*\layersep,-1 cm) {$v_{2,3}$};
		 		\node[hidden neuron] (H-8)
		 		at (2*\layersep,-2 cm) {$v_{2,4+5}$};
		 		
		 		\node[output neuron] at (3*\layersep, -0.5) (O-1) {$y$};
		 		
		 		\draw[nnedge] (I-1) -- node[above] {$1$} (H-1);
		 		\draw[nnedge] (I-1) -- node[above] {$2$} (H-2);
		 		
		 		\draw[nnedge] (I-2) -- node[above] {$2$} (H-3);
		 		\draw[nnedge] (I-2) -- node[above] {$1$} (H-4);
		 		
		 		\draw[nnedge] (H-1) -- node[above] {$3$} (H-5);
		 		\draw[nnedge] (H-1) -- node[above] {$2$} (H-6);
		 		\draw[nnedge] (H-2) -- node[above, pos=0.2] {$1$} (H-7);
		 		\draw[nnedge] (H-3) -- node[left, pos=0.15] {$-1$} (H-5);
		 		\draw[nnedge] (H-3) -- node[below, pos=0.3] {$-2$} (H-6);
		 		\draw[nnedge] (H-4) -- node[above] {$8$} (H-8);
		 		
		 		\draw[nnedge] (H-5) -- node[above] {$1$} (O-1);
		 		\draw[nnedge] (H-6) -- node[above] {$1$} (O-1);
		 		\draw[nnedge] (H-7) -- node[above, pos=0.3] {$-4$} (O-1);
		 		\draw[nnedge] (H-8) -- node[above] {$2$} (O-1);
		 		
		 		\begin{pgfonlayer}{background}
		 			\newcommand*{\nodePadding}{0.2cm}
		 			\tikzstyle{background_rectangle}=[rounded corners, fill = navy!10]
		 			
		 			\draw[fill = orange!20, draw=none, rounded corners]
		 			($(H-5.south west) + (-\nodePadding, -\nodePadding ) $)
		 			rectangle
		 			($(H-5.north east) + ( \nodePadding, \nodePadding) $);
		 			
		 			\draw[fill = orange!20, draw=none, rounded corners]
		 			($(H-6.south west) + (-\nodePadding, -\nodePadding ) $)
		 			rectangle
		 			($(H-6.north east) + ( \nodePadding, \nodePadding) $);
		 			
		 			\draw[fill = purple!20, draw=none, rounded corners]
		 			($(H-8.south west) + (-\nodePadding, -\nodePadding ) $)
		 			rectangle
		 			($(H-8.north east) + ( \nodePadding, \nodePadding) $);
		 			
		 		\end{pgfonlayer}
		 		
		 	\end{tikzpicture}
		 }
		 \\
		(a) $ N $ & (b) $ N' $ & (c) $ N'' $ \\[6pt]
	\end{tabular}
	\caption{Neural network abstraction and refinement through the
		merging and splitting of neurons~\cite{ElGoKa20}.}
	\label{fig:running-example}
\end{figure}

\section{Residual Reasoning (RR)}\label{sec:residual_reasoning}
Consider again our running example, and observe that property
$ \phi $ is satisfiable for the most abstract network: for
$\vec{x_0}=\langle 0,1\rangle$ we have $N'(\vec{x_0})=16$. However,
this $\vec{x_0}$ is a spurious counterexample, as
$N(\vec{x_0})=9$. Consequently, refinement is performed, and the
verifier sets out to verify $\langle N'', \varphi\rangle$; and this
query is solved from scratch. However, notice that the verification
queries of $ \phi $ in $ N', N'' $ are very similar: 
the networks are almost identical, and the property is
the same. The idea is thus to re-use some of the information already
discovered when $\langle N', \varphi\rangle$ was solved in order to
expedite the solving of $\langle N'', \varphi\rangle$. Intuitively, an
abstract network allows the verifier to explore the search space very
coarsely, whereas a refined network allows the verifier to explore
that space in greater detail. Thus, areas of that space that were
determined safe for the abstract network need not be re-explored in
the refined network.

In order to enable knowledge retention between subsequent calls to the
verifier, we propose to introduce a \emph{context} variable, $\Gamma$,
that is passed to  the verifier along with each verification query.
$\Gamma$ is used in two ways:
\begin{inparaenum}[(i)]
  \item the verifier can store into $\Gamma$ information that may be
    useful if a refined version of the current network is later
    verified; and
  \item the verifier may use information already in $\Gamma$ to
    curtail the search space of the query currently being solved.
  \end{inparaenum}
A scheme of the proposed mechanism appears in
Fig.~\ref{fig:AR-RR-mechanisms} in Appendix~\ref{appendix:CEGAR}.
Of course, 
$\Gamma$ must be designed carefully in order to maintain soundness.

\subsubsection{Avoiding Case-Splits with $\Gamma$.}
In order to expedite subsequent verification queries, we propose to
store in $\Gamma$ information that will allow the verifier to
\emph{avoid case splits}. Because case splits are the
most significant bottleneck in DNN
verification~\cite{Katz2017Reluplex, TjXiTe17}, using
$\Gamma$ to reduce their number seems
like a natural strategy.

Let $N'$ be an abstract network, and $N''$ its refinement; and observe
the queries 
 $\langle N', \varphi\rangle$ and $\langle N'', \varphi\rangle$. Let
$R_1,\ldots,R_n$ denote the \relu{} constraints in $N'$. For each
\relu{} $R_i$, we use a Boolean variable $r_i$ to indicate whether the
constraint is active ($r_i$ is true), or inactive ($\neg r_i$ is true). We then define
$\Gamma$ to be a CNF formula over these Boolean variables:
\[
  \Gamma :\qquad \bigwedge( \bigvee_{l_j\in\bigcup_{i=1}^n\{r_i,\neg r_i\}}  l_j)
\]

In order for our approach to maintain soundness, $\Gamma$ needs to be
a \emph{valid formula} for $\langle N'', \varphi\rangle$; i.e., if
there exists an assignment that satisfies
$\langle N'', \varphi\rangle$, it must also satisfy $\Gamma$.  Under
this assumption, a verifier can use $\Gamma$ to avoid case-splitting
during the verification of the refined network, using
unit-propagation~\cite{BiHeVa09}. For example, suppose that one of the
clauses in $\Gamma$ is $(r_1\lor \neg r_2 \lor \neg r_3)$, and that
while verifying the refined network, the verifier has performed two
case splits already to the effect that $r_1$ is false ($R_1$ is
inactive) and $r_2$ is true ($R_2$ is active). In this case, the
verifier can immediately set $r_3$ to false, as it is guaranteed that
no satisfying assignments exist where $r_3$ is true, as these would
violate the clause above. This guarantees that no future splitting is
performed on $R_3$.


More formally, we state the following Lemma:
\begin{lemma} [Soundness of Residual Reasoning]
  Let $\langle N', \varphi\rangle$ and $\langle N'', \varphi\rangle$
  be verification queries on an abstract network $N'$ and its
  refinement $N''$, being
  solved by a sound verifier; and let $\Gamma$ be a valid formula as
  described above. If the verifier uses $\Gamma$ to deduce the phases
  of \relu{} constraints using unit propagation for the verification of $\langle N'', \varphi\rangle$, soundness is
  maintained.
\end{lemma}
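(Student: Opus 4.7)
The plan is to reduce the lemma to two facts: (i) unit propagation on a CNF formula is a sound deduction rule, and (ii) the validity hypothesis on $\Gamma$ guarantees that the set of true satisfying assignments of $\langle N'', \varphi\rangle$ is contained in the set of assignments compatible with $\Gamma$. Together these imply that nothing the verifier prunes via $\Gamma$ could have been a genuine counter-example, while anything it returns as \sat{} is certified by the original (unaugmented) constraints as well.

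First, I would formalize the search space being explored. An execution of a case-splitting verifier on $\langle N'', \varphi\rangle$ traverses branches indexed by partial assignments $\sigma$ to the Boolean phase variables $\{r_i\}$ of the ReLUs in $N''$; at each leaf, a linear program decides whether $\vec{x}\in D_I$, $y>c$ and the linear constraints induced by $\sigma$ are jointly feasible. Without residual reasoning, the verifier is free to explore every $\sigma$; with $\Gamma$ in hand, unit propagation forces $r_i$ (respectively $\lnot r_i$) whenever the current partial assignment together with the opposite value would falsify some clause of $\Gamma$. Consequently, the only branches pruned are those whose partial assignment cannot be extended to any $\sigma\models\Gamma$.

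Second, I would invoke validity to argue that this pruning is harmless. By hypothesis, every $(\vec{x},\sigma)$ that actually satisfies $\langle N'', \varphi\rangle$ also satisfies $\Gamma$. Hence if a branch is pruned by unit propagation, every $\sigma$ extending its partial assignment violates $\Gamma$ and therefore is not a true satisfying assignment of $\langle N'', \varphi\rangle$. For the \sat{} direction, note that the verifier only returns witnesses $(\vec{x},\sigma)$ that survive both the ordinary linear/ReLU checks and the clauses of $\Gamma$; in particular such a witness satisfies $\langle N'', \varphi\rangle$ irrespective of $\Gamma$, so the verifier is still sound in reporting \sat{}. For the \unsat{} direction, the verifier has refuted every $\sigma$ compatible with $\Gamma$; the remaining $\sigma$'s (those incompatible with $\Gamma$) are, by validity, not satisfying assignments either, so no counter-example exists and \unsat{} is correct.

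The main obstacle, and really the only subtle point, is the bookkeeping around the correspondence between $\Gamma$'s variables and the ReLUs of $N''$: $\Gamma$ was defined over the ReLUs of $N'$, whereas the refined network $N''$ may have additional (split) ReLUs. I would handle this by observing that the validity hypothesis is stated directly for $\langle N'', \varphi\rangle$, so any identification between the two sets of ReLUs is already baked into the statement that $\Gamma$ is valid; extra ReLUs of $N''$ simply do not appear in $\Gamma$ and are therefore left unconstrained by unit propagation, which keeps the argument above intact.
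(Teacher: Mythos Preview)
Your argument is correct and is exactly the kind of ``straightforward'' reasoning the paper has in mind: the paper omits this proof entirely, stating only that it is straightforward, so there is no alternative approach to compare against. Your decomposition into (i) soundness of unit propagation and (ii) the validity hypothesis on $\Gamma$ is the natural way to fill in the gap, and your remark on variable bookkeeping aligns with the paper's own note that renaming is required when refinement steps accumulate.
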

The proof is straightforward, and is omitted. We also note that when
multiple consecutive refinement steps are performed, some renaming of
variables within $\Gamma$ is required; we discuss this in later
sections.

\section{Residual Reasoning and Neuron-Merging Abstraction}
\label{sec:rr_instantiation}
Our proposed approach for residual reasoning is quite general; and our
definitions do not specify how $\Gamma$ should be populated. In order
to construct in $\Gamma$ a lemma that will be valid for future
refinements of the network, one must take into account the specifics
of the abstraction-refinement scheme in use. In this section, we
propose one possible integration with a recently proposed
abstraction-refinement scheme that merges and splits
neurons~\cite{ElGoKa20}, which was discussed in Sec.~\ref{sec:preliminaries}.

We begin by revisiting our example from Fig.~\ref{fig:running-example}.
Suppose that in order to solve query
$\langle N, \varphi\rangle$, we generate the abstract network $N'$
and attempt to verify  $\langle N', \varphi \rangle$ instead.  During
verification, some case splits are performed; and it is discovered
that when neuron $v_{2,1+2}$'s \relu{} function is active, no
satisfying assignment can be found. Later, the verifier discovers a
satisfying assignment for which $v_{2,1+2}$ is inactive:
$\vec{x}= \langle 0, 1\rangle \Rightarrow N'(\vec{x})=16 > 14
$. Unfortunately, this counterexample turns out to be spurious,
because
$ N(\langle 0,1\rangle )=9\leq14 $, and so the network is refined:
node $ v_{2,1+2} $ is split into two new nodes, ($ v_{2,1},v_{2,2} $),
giving rise to the refined network $ N'' $. The verifier then
begins solving query $\langle N'', \varphi\rangle$.

We make the following claim: because no satisfying assignment exists
for $\langle N', \varphi\rangle$ when $v_{2,1+2}$ is active, and
because $v_{2,1+2}$ was refined into ($ v_{2,1},v_{2,2} $), then no
satisfying assignment exists for $\langle N'', \varphi\rangle$
when $ v_{2,1} $ and $ v_{2,2} $ are both active. In other words, it
is sound to verify $\langle N'', \varphi\rangle$ given
$\Gamma=(\neg r_{2,1}\vee \neg r_{2,2})$, where $r_{2,1}$ and $r_{2,2}$
correspond to the activation phase of $v_{2,1}$ and $v_{2,2} $,
respectively. Thus, e.g., if the verifier performs a case split and fixes
$v_{2,1}$ to its active phase, it can immediately set $v_{2,2}$  to
inactive, without bothering to explore the case where $v_{2,2}$ is
also active.

In order to provide intuition as to why this claim holds, we now
formally prove it; i.e., we show that if an input $\vec{x}$ satisfies
$\langle N'', \varphi\rangle $ when $v_{2,1}$ and $v_{2,2}$ are both
active, then it must also satisfy $\langle N'_1, \varphi\rangle$ when
$v_{2,1+2}$ is active. First, we observe that because $N''$ is a
refinement of $N'$, it
immediately follows that $ N''(\vec{x})\leq N'(\vec{x})$;
and because the property $\varphi$ is of the form $y>c$, if
$\langle N'', \varphi\rangle $ is \sat{} then
$\langle N', \varphi\rangle $ is also \sat{}. Next, we observe that
$N''$ and $N'$ are identical in all layers preceding
$v_{2,1} , v_{2,2}$ and $v_{2,1+2}$, and so all neurons feedings into
these three neurons are assigned the same values in both
networks. Finally, we assume towards contradiction that $v_{2,1+2}$ is
not active; i.e., that $3\cdot \relu{}(v_{1,1})-\relu{}(v_{1,3})<0$;
but because it also holds that
$ v_{2,1}=3\cdot \relu{}(v_{1,1})-\relu{}(v_{1,3}) $, this contradicts
the assumption that $v_{2,1}$ and $v_{2,2}$ are both active. This
concludes our proof, and shows that
$\Gamma=(\neg r_{2,1}\vee \neg r_{2,2})$ is valid.

In the remainder of this section, we formalize the principle
demonstrated in the example above. The formalization is complex, and
relies on the details of the abstraction mechanism~\cite{ElGoKa20}; we give here the gist of the formalization, with
additional details appearing in
Appendix~\ref{appendix:formalization}.

Using the terminology of~\cite{ElGoKa20}, two nodes can be merged as
part of the abstraction process if they share a \emph{type}:
specifically, if they are both \inc{} neurons, or if they are both
\dec{} neurons. An \inc{} neuron has the property that \emph{increasing} its
value results in an increase to the network's single output; whereas a
\dec{} neuron has the property that \emph{decreasing} its value
increases the network's single output. In our running example, neuron $ v_{2,1+2} $ is an \inc{}
neuron, whereas neuron $ v_{2,3} $ is a \dec{} neuron.

We use the term \textit{abstract neuron} to refer to a neuron
generated by the merging of two neurons from the same category, and
the term \emph{refined neuron} to refer to a neuron that was
generated (restored) during a refinement step.
An example for the merging of two \inc{} neurons appears in
Fig.~\ref{fig:abstract-and-refined-networks}.

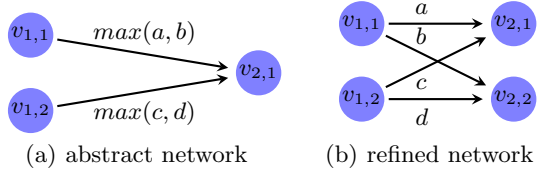
\begin{figure}[htp]
  \centering
  \subfigure[abstract network]
  {
    \label{fig:abstractnetwork}
    \scalebox{1} {
      \def\layersep{2.5cm}
      \begin{tikzpicture}[shorten >=1pt,->,draw=black!50, node distance=\layersep,font=\footnotesize]
        
        \node[hidden neuron] (I-1) at (0, -0.5) {$v_{1,1}$};
        \node[hidden neuron] (I-2) at (0, -1.5) {$v_{1,2}$};
        
        \node[hidden neuron] (H-1) at (3, -1.0) {$v_{2,1}$};
        
        \draw[nnedge] (I-1) -- node[above] {$max(a,b)$} (H-1);
        \draw[nnedge] (I-2) -- node[below] {$max(c,d)$} (H-1);
      \end{tikzpicture}
    }
  }
  \subfigure[refined network]
  {
  	\label{fig:refinednetwork}
  	\scalebox{1} {
  		\def\layersep{2.5cm}
  		\begin{tikzpicture}[shorten >=1pt,->,draw=black!50, node distance=\layersep,font=\footnotesize]
  			
  			\node[hidden neuron] (I-1) at (0, -0.5) {$v_{1,1}$};
  			\node[hidden neuron] (I-2) at (0, -1.5) {$v_{1,2}$};
  			
  			\node[hidden neuron] (H-1) at (2, -0.5) {$v_{2,1}$};
  			\node[hidden neuron] (H-2) at (2, -1.5) {$v_{2,2}$};
  			
  			\draw[nnedge] (I-1) -- node[above left] {$a$} (H-1);
  			\draw[nnedge] (I-1) -- node[above, pos=0.35] {$b$} (H-2);
  			\draw[nnedge] (I-2) -- node[below, pos=0.35] {$c$} (H-1);
  			\draw[nnedge] (I-2) -- node[below left] {$d$} (H-2);					
  		\end{tikzpicture}
  	}
  }
  \caption{Abstraction/refinement of two \textit{inc} neurons.}
  \label{fig:abstract-and-refined-networks}
\end{figure}	

We now state our main theorem, which justifies our method of
populating $\Gamma$. We then give an outline of the proof, and refer
the reader to Theorem 2 in Appendix~\ref{appendix:formalization} for additional details.

\begin{theorem}
  \label{theorem:prune}
  Let $\langle N, \varphi\rangle$ be a verification query, where
  $N:\vec{x}\rightarrow y$ has a single output node $y$, and $\varphi$
  is of the form $\varphi= (\vec{l}\leq\vec{x}\leq\vec{u})\wedge
  (y>c)$. Let $N'$ be an abstract network obtained from $N$ using
  neuron merging, and let $N''$ be a network obtained from
  $N'$ using a single refinement step in reverse order of abstraction. Specifically, let $v$ be
  a neuron in $N'$ that was split into two neurons $v_1,v_2$ in
  $N''$. Then, if a certain guard condition $\guard{}$ holds, we
  have the following:
  \begin{enumerate}
  \item If $ v $ is \inc{} neuron, and during the verification of $\langle N', \varphi\rangle$, the verifier determines that setting $v$ to active leads to an \unsat{} branch of the search tree, then $\Gamma=(\neg r_1\lor \neg r_2)$ is a valid formula for $\langle N'', \varphi\rangle$ (where $r_1$ and $r_2$ correspond to $v_1$ and $v_2$, respectively).
  \item Symmetrically, if setting a \dec{} neuron $v$ to inactive
    leads to an \unsat{} branch, then $\Gamma=(r_1\lor r_2)$ is a
    valid formula for $\langle N'', \varphi\rangle$.
  \end{enumerate}
\end{theorem}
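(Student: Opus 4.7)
The plan is to prove both cases by contradiction, using a concrete satisfying assignment argument to derive a contradiction with the assumed \unsat{} branch. I focus on Case~1 (the \inc{} case); Case~2 is fully symmetric once the signs are flipped.

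First I would fix an arbitrary input $\vec{x}$ witnessing that $\langle N'', \varphi\rangle$ is \sat{} under the partial assignment in which both refined neurons $v_1$ and $v_2$ are in their active phase, and aim to show that $\vec{x}$ also witnesses $\langle N', \varphi\rangle$ under the assignment in which the abstract neuron $v$ is active; the hypothesis that the active branch of $v$ in $N'$ was \unsat{} then yields the contradiction. The key ingredients, in order, are: (i) by the over-approximation property of the merging abstraction from~\cite{ElGoKa20}, $N''(\vec{x}) \le N'(\vec{x})$, and since $\varphi$ has the form $y > c$, any satisfying $\vec{x}$ for $N''$ also satisfies $N'$; (ii) $N'$ and $N''$ are identical on every layer preceding the layer of $v, v_1, v_2$, so the pre-activation values of all neurons feeding into these three neurons coincide, and the two networks also agree on everything that happens after $v, v_1, v_2$ up to the over-approximation; (iii) by the merging rule for \inc{} neurons, each incoming weight of $v$ is the $\max$ of the corresponding incoming weights of $v_1$ and $v_2$; (iv) the guard condition $\guard{}$ ensures that the incoming signals to $v, v_1, v_2$ are non-negative (they are \relu{} outputs from source neurons whose type is compatible with the merge), so the weight-wise inequality lifts to an inequality on the weighted sums: the pre-activation value of $v$ is $\ge$ the pre-activation value of $v_1$, which is $\ge 0$ by our assumption that $v_1$ is active. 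Hence $v$ is active in $N'$ on input $\vec{x}$, contradicting the hypothesis.

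The main obstacle I anticipate is isolating exactly the content of the guard $\guard{}$ and verifying that it legitimately licenses the lift in step~(iv). In general, merging weights via $\max$ does \emph{not} preserve the sign of a weighted sum when the operands may be negative; the argument therefore depends on the incoming signals being non-negative and on the type-compatibility (\inc{} vs.\ \dec{}) between source and target prescribed by the abstraction of~\cite{ElGoKa20}. Pinning down the precondition that makes this step sound, and checking that it is exactly what $\guard{}$ supplies, is the technical crux; the remainder is routine bookkeeping. For Case~2, the symmetric proof replaces $\max$ with $\min$, active with inactive, and flips the inequality throughout, but is otherwise identical.
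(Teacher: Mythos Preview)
Your overall architecture --- assume a satisfying $\vec{x}$ for $\langle N'',\varphi\rangle$ with both $v_1,v_2$ active, and show it witnesses \sat{} for the $v$-active branch of $\langle N',\varphi\rangle$ --- matches the paper, and your step~(iv) is essentially the paper's key Lemma (``if the abstract \inc{} node is negative, then one of the refined nodes is negative'') in contrapositive form. In fact your observation that the pre-activation of $v$ dominates that of each $v_i$ pointwise, because the feeding signals are $\ge 0$ and the weights of $v$ are coordinatewise $\max$, is a cleaner route to that lemma than the paper's four-way case split.

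But you have misidentified what $\guard{}$ does, and this leaves a real gap. The signals feeding $v,v_1,v_2$ are \emph{always} non-negative --- they are \relu{} outputs --- so no guard is needed for that, and your step~(iv) is already sound. The guard's actual job lies elsewhere. An \unsat{} branch in $N'$ is determined not only by ``$v$ active'' but by a whole collection of case splits, including splits on neurons in layers \emph{after} the layer of $v$. For $\vec{x}$ to land in that branch of $N'$, it must respect \emph{all} those phase constraints. Your argument covers the input bounds, the output constraint (via $N''(\vec{x})\le N'(\vec{x})$), the preceding-layer phases (networks identical there), and the phase of $v$ itself; it never addresses phase constraints on later-layer neurons, whose pre-activation values genuinely differ between $N'$ and $N''$. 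The throwaway clause ``agree on everything that happens after $v,v_1,v_2$ up to the over-approximation'' does not discharge this.

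This is exactly where the paper's guard does its work. Part~(iii) of $\guard{}$ requires every \inc{} neuron in a subsequent layer to be split \emph{active} and every \dec{} neuron to be split \emph{inactive}. Under refinement, the value of an \inc{} neuron can only decrease and that of a \dec{} neuron can only increase; hence an \inc{} neuron with value $\ge 0$ in $N''$ has value $\ge 0$ in $N'$, and a \dec{} neuron with value $\le 0$ in $N''$ has value $\le 0$ in $N'$. With (iii) in force, every later-layer phase constraint therefore transfers from $N''$ to $N'$; without it --- say an \inc{} neuron had been split \emph{inactive} --- the transfer fails and your contradiction does not close. The paper accordingly structures the proof as a case analysis over where a violated constraint can live (input, output, preceding layers, the abstract neuron, subsequent layers), with the last case carried by guard~(iii). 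That case analysis, and the correct reading of $\guard{}$, is what your plan is missing.
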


The guard condition $\guard{}$ is intuitively defined as the
conjunction of the following stipulations, whose goal is to enforce
that the branches in both search trees (of
$\langle N', \varphi\rangle$ and $\langle N'', \varphi\rangle$) are
sufficiently similar:
\begin{enumerate}
\item The same case splits have been applied during the verification
  of $ N' $ and $ N'' $, for all neurons in the preceding layers of
  the abstract neuron and for any other neurons in the same layer as
  the abstract neuron.
\item The same case splits have been applied during the verification of $ N' $ and $ N'' $ for the abstract neuron and its refined neurons.
\item Every \inc{} neuron in layers following the layer of
  $v,v_1,v_2$
  has been split on and set to active, and every \dec{} neuron in these
  layers has been split on and set to inactive.  
\end{enumerate}
We stress that the guard condition $\guard{}$ does not change the way
$\Gamma$ is populated; but that the verifier must ensure that
$\guard{}$ holds before it applies unit-propagation based on $\Gamma$.
The precise definitions and proof appear in
Appendix~\ref{appendix:formalization}.

When the conditions of the theorem are met, a satisfying assignment
within the specific branch of the search tree of the refined network
would indicate that the corresponding branch in the abstract network
is also \sat{}, which we already know is untrue; and consequently,
that branch can be soundly skipped. To prove the theorem, we require the two
following lemmas, each corresponding to one of the two cases of the
theorem.

\renewcommand*{\proofname}{Proof Outline}

\begin{lemma}
  \label{abstract max negative->refinement negative}
  Given an input $\vec{x}$, if the value of an abstract \inc{} node
  $v$ is negative, then at least one of the values of the refined
  nodes $v_1$ and $v_2$ is negative for the same $\vec{x}$.
\end{lemma}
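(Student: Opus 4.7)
The plan is to prove the stronger statement that $v \ge v_1$ and $v \ge v_2$ hold for the given $\vec{x}$ --- this immediately implies that $v<0$ forces both $v_1<0$ and $v_2<0$, and in particular at least one of them is negative, which is what the lemma requires. The argument is a short monotonicity calculation that hinges on the specific rule used to set the incoming edge weights when two \inc{} neurons are merged by the abstraction scheme of~\cite{ElGoKa20}.

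Concretely, I would fix $\vec{x}$ and let $u_1,\ldots,u_k$ denote the post-\relu{} values of the neurons in the layer feeding into $v$ in $N'$ (equivalently, feeding into $v_1$ and $v_2$ in $N''$), evaluated on $\vec{x}$. Let $a_i$ and $b_i$ denote the weights on the edges from the $i$'th such predecessor to $v_1$ and $v_2$, respectively, in $N''$. The \inc{}-merging rule dictates that the weight on the corresponding edge into the abstract node $v$ in $N'$ equals $\max(a_i,b_i)$. Hence the pre-activation values can be written as
\[
v_1 = \sum_{i=1}^{k} a_i\, u_i,\qquad v_2 = \sum_{i=1}^{k} b_i\, u_i,\qquad v = \sum_{i=1}^{k} \max(a_i,b_i)\, u_i.
\]
Since every $u_i \ge 0$ (being the output of a \relu{}) and $\max(a_i,b_i) \ge a_i$ as well as $\max(a_i,b_i) \ge b_i$, a termwise comparison yields $v \ge v_1$ and $v \ge v_2$, from which the contrapositive conclusion is immediate.

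The only subtlety --- and what I expect to be the single mildly technical step --- is justifying the non-negativity $u_i \ge 0$. For any hidden layer past the first this is automatic from the \relu{} applied on the preceding layer; the boundary case of merging in the first hidden layer is handled by the sign-based pre-processing of~\cite{ElGoKa20} that is performed before abstraction, which guarantees that any layer participating in merging receives only non-negative inputs. Aside from this bookkeeping (and the matching symmetric statement for \dec{} neurons that is needed for the companion lemma), the proof reduces to the one-line monotonicity argument above.
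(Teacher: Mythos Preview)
Your argument is correct and in fact establishes more than the lemma asks: from $u_i\ge 0$ and $\max(a_i,b_i)\ge a_i,b_i$ you get $v\ge v_1$ and $v\ge v_2$ simultaneously, so $v<0$ forces \emph{both} refined pre-activations to be negative, not just one. This is a cleaner route than the paper's own proof, which works on the two-input instance of Fig.~\ref{fig:abstract-and-refined-networks}, writes out the implication
\[
x_1\max(a,b)+x_2\max(c,d)<0\ \Rightarrow\ (x_1 a+x_2 c<0)\lor(x_1 b+x_2 d<0),
\]
and then performs a four-way case split on which of $x_1,x_2$ are zero, followed in the generic case by a further split on whether $a'\ge b'$. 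All of that case analysis is subsumed by your single termwise monotonicity step; the paper's argument implicitly rediscovers the same inequalities but never states $v\ge v_1,v_2$ directly. Your version also scales transparently to an arbitrary number $k$ of predecessors, whereas the paper argues on $k=2$ and appeals to ``straightforward generalization''. The one assumption you flag---non-negativity of the $u_i$---is exactly the assumption the paper invokes (``the values of $x_1,x_2$ are the outputs of \relu{}s''), so you are on the same footing there.
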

\begin{proof}
We explain how to prove the lemma using the general network from
Fig.~\ref{fig:abstract-and-refined-networks}; and this proof can be
generalized to any network in a straightforward way.
  Observe nodes $v_{2,1}$ and $v_{2,2}$ in
  Fig.~\ref{fig:refinednetwork}, which are nodes refined from node
  $v_{2,1}$ in Fig~\ref{fig:abstractnetwork}.
  We need to prove that the following
  implication holds:
  \[
    x_1 \cdot max(a,b) + x_2 \cdot max(c,d) < 0 \Rightarrow (x_1 \cdot
    a + x_2 \cdot c < 0 \lor x_1 \cdot b + x_2 \cdot d < 0)
    \]
     The values of $ x_1,x_2 $ are the outputs of \relu{}s, and so are
     non-negative. We can thus split 
     into 4 cases:
  \begin{enumerate}
  \item If $ x_1=0, x_2=0 $, the implication holds trivially.
  \item If $ x_1=0, x_2>0 $, then
    $x_2 \cdot max(c,d)<0 $, and  so $ c,d<0 $. We get that $ x_1 \cdot
    a+x_2 \cdot c=x_2 \cdot c<0 $ and $ x_1 \cdot b+x_2 \cdot d=x_2
    \cdot d<0 $,  and so the implication holds.
  \item The case where $ x_1>0, x_2=0 $ is symmetrical to the previous
    case.
  \item If $ x_1>0, x_2>0 $, the implication becomes
    \[
      max(x_1 \cdot a,x_1 \cdot b) + max(x_2 \cdot c,x_2 \cdot d) < 0 \Rightarrow (x_1 \cdot
      a + x_2 \cdot c < 0 \lor x_1 \cdot b + x_2 \cdot d < 0)
    \]
  Let us  denote $ a'=x_1 \cdot a,\ b'=x_1\cdot b$ and $c'=x_2\cdot
  c,\ d'=x_2\cdot d $. The lemma then becomes:
  \[
    max(a',b')+max(c',d') < 0 \Rightarrow a'+c'<0 \lor b'+d'<0
  \]
  
  \begin{itemize}
  	\item If $ a' \ge b' $, then $ a'=max(a',b') $ and $
          a'+max(c',d') < 0 $. We then get that
          \[
            b'+d' \le a' + d' \le a'+max(c',d') < 0
          \]
          as needed.
  	\item If $ a' < b' $, then $ b'=max(a',b') $ and $
          b'+max(c',d') < 0 $. We then get that
          \[
            a'+c' \le b' + max(c',d') < 0
          \]
          again as needed.
  \end{itemize}
  \end{enumerate}
\end{proof}

Lemma~\ref{abstract max negative->refinement negative} establishes the
correctness of Theorem~\ref{theorem:prune} for \inc{} neurons. We also
have the following, symmetrical lemma for \dec{} neurons:

\renewcommand*{\proofname}{Proof}

\begin{lemma}
	\label{abstract min positive->refinement positive}
	Given an input $\vec{x}$, if the value of an abstract \dec{}
        node $v$ is positive, then at least one of the values of the
        refined nodes $v_1$ and $v_2$ is positive for the same $\vec{x}$.
\end{lemma}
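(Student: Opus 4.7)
The plan is to mirror the proof of Lemma~\ref{abstract max negative->refinement negative} almost verbatim, with two systematic changes: replace the abstraction operator $\max$ by $\min$ (since two \dec{} neurons are merged by taking the \emph{minimum} of their corresponding incoming edge weights), and reverse the direction of every strict inequality. Using the two-input schematic analogous to Fig.~\ref{fig:abstract-and-refined-networks} but where the abstract node $v$ is now a \dec{} neuron, the goal reduces to proving
\[
  x_1\cdot\min(a,b) + x_2\cdot\min(c,d) > 0 \;\Rightarrow\; \bigl(x_1\cdot a + x_2\cdot c > 0\bigr) \lor \bigl(x_1\cdot b + x_2\cdot d > 0\bigr),
\]
where $x_1,x_2 \geq 0$ are the outputs of \relu{}s in the preceding layer (and are therefore non-negative, exactly as in the previous lemma).

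The first step is to perform the same four-way case split on whether each of $x_1,x_2$ is zero or strictly positive. The three boundary cases are immediate: if $x_1=x_2=0$ the hypothesis fails vacuously; if exactly one of them, say $x_2$, is positive, then $x_2\cdot\min(c,d)>0$ forces $\min(c,d)>0$, hence both $c>0$ and $d>0$, which in turn makes \emph{both} disjuncts in the conclusion true. The symmetric case $x_1>0,\,x_2=0$ is handled the same way.

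The interesting case is $x_1,x_2>0$. Setting $a'=x_1 a$, $b'=x_1 b$, $c'=x_2 c$, $d'=x_2 d$, the target implication becomes
\[
  \min(a',b')+\min(c',d') > 0 \;\Rightarrow\; a'+c'>0 \,\lor\, b'+d'>0.
\]
Here I would split on the sign of $a'-b'$. If $a'\leq b'$, then $\min(a',b')=a'$, so $a'+\min(c',d')>0$, and since $c'\geq \min(c',d')$ we obtain $a'+c'\geq a'+\min(c',d')>0$. If $a'>b'$, then $\min(a',b')=b'$, and analogously $b'+d'\geq b'+\min(c',d')>0$. Either way one of the disjuncts holds, which closes this case and the lemma.

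The main (and essentially only) obstacle is bookkeeping: ensuring that the inequality directions and the choice of disjunct line up correctly under $\min$ rather than $\max$. As in Lemma~\ref{abstract max negative->refinement negative}, generalization from the two-input illustration to arbitrarily many incoming edges is routine and follows the same pattern.
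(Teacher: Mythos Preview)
Your proposal is correct and follows essentially the same approach as the paper: the same four-way case split on which of $x_1,x_2$ vanish, the same reduction via $a'=x_1a$, etc., in the main case, and the same split on whether $a'\le b'$ or $a'>b'$. The only cosmetic difference is which disjunct you choose to establish in each sub-case (you pick $a'+c'>0$ when $a'\le b'$, the paper picks $b'+d'>0$); both choices work since in fact both disjuncts hold there.
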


The proof outline is similar to that of Lemma~\ref{abstract max
  negative->refinement negative}, and appears in
Appendix~\ref{appendix:formalization}.

The result of applying Theorem~\ref{theorem:prune} as part of the
verification of our running example from
Fig.~\ref{fig:running-example} is illustrated in
Fig.~\ref{fig:AR4-pruning}. There, each rectangle represents a single
verification query, and blue lines indicate abstraction steps. Within
each rectangle, we see the verifier's search tree, where triangles
signify sub-trees --- and red triangles are sub-trees where the
verifier was able to deduce that no satisfying assignment exists. The
figure shows that, when solving the query in the bottom rectangle, the
verifier discovered an \unsat{} sub-tree that meets the conditions of
the Theorem. This allows the verifier to deduce that another sub-tree,
in another rectangle/query, is also \unsat{}, as indicated by a green
arrow.  Specifically, by discovering that setting $v_{2,1+2}$ to
\emph{active} results in \unsat{}, the verifier can deduce that
setting $v_{2,1}$ to \emph{active} and then $v_{2,2}$ to \emph{active}
must also result in \unsat{}.

\begin{figure}
  \begin{center}
  	\includegraphics[scale=0.3]{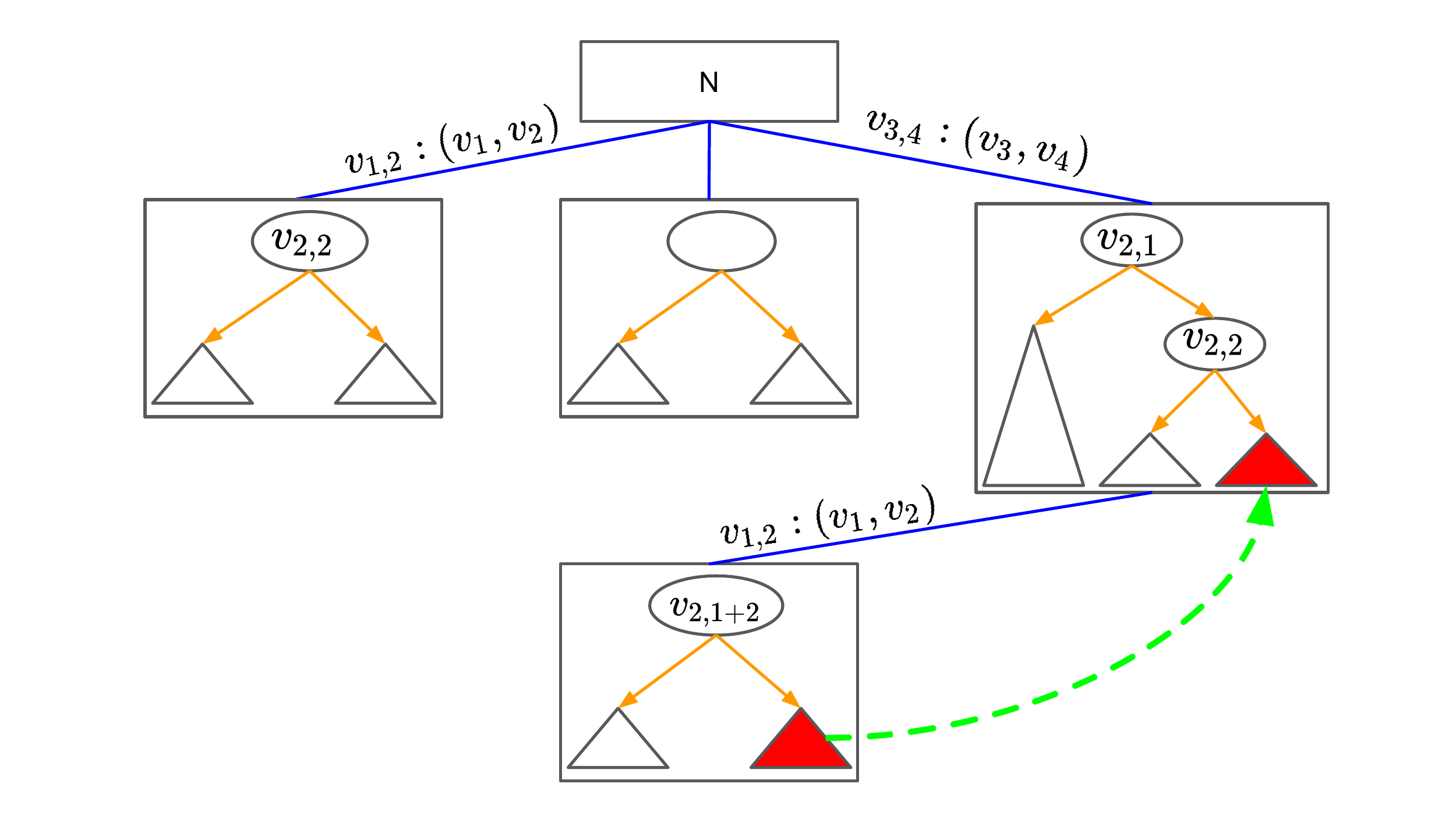}
  \end{center}
  \caption{Applying Theorem~\ref{theorem:prune} while solving the
    query from Fig.~\ref{fig:running-example}.}
  \label{fig:AR4-pruning}
\end{figure}

\mysubsection{Multiple Refinement Steps.}  So far, we have only
discussed populating $\Gamma$ for a single refinement step. However,
$\Gamma$ can be adjusted as multiple refinement steps are
performed. In that case, each invocation of
Theorem~\ref{theorem:prune} adds another CNF clause to the formula
already stored in $\Gamma$. Further, some book-keeping and renaming is
required, as neuron identifiers change across the different networks:
intuitively, whenever an abstract neuron $v$ is split into neurons
$v_1$ and $v_2$, the literal $v$ must be replaced with $v_1\vee
v_2$. These notions are formalized in Sec.~\ref{sec:ar4}; and the
soundness of this procedure can be proven using repeated invocations
of Theorem~\ref{theorem:prune}.

\section{Adding Residual Reasoning to Reluplex}
\label{sec:ar4}
Unlike in previous abstraction-refinement approaches for DNN
verification~\cite{ElGoKa20, AsHaKrMu20, PrAf20}, residual
reasoning requires instrumenting the DNN verifier in question, for the
purpose of populating, and using, $\Gamma$. We next describe such an
instrumentation for the Reluplex algorithm~\cite{Katz2017Reluplex},
which is the core algorithm used in the state-of-the-art verifier
Marabou~\cite{Katz2019Marabou}.  Reluplex, a sound and complete DNN
verification algorithm, employs case-splitting as discussed in
Sec.~\ref{sec:preliminaries}, along with various heuristics for
curtailing the search space and reducing the number of
splits~\cite{WuOzZeIrJuGoFoKaPaBa20, WuZeKaBa22}; and it has been
integrated with abstraction-refinement techniques
before~\cite{ElGoKa20}, rendering it a prime candidate for residual
reasoning. We term our enhanced version of Reluplex $AR^4$, which
stands for Abstraction-Refinement with Residual Reasoning for
Reluplex.

For our purposes, it is convenient to think of Reluplex as a set of
derivation rules, applied according to an implementation-specific
strategy. The most relevant parts of this calculus, borrowed from Katz
et al.~\cite{Katz2017Reluplex} and simplified, appear in
Fig.~\ref{fig:Reluplex-calculus}; other rules, specifically those that
deal with the technical aspects of solving linear problems, are
omitted for brevity.

\begin{figure}[ht!]
	\begin{centering}
		\scriptsize

		\failure
		\drule{
                  \exists x\in\allvars.\ l(x)>u(x)
                  }
		{
			\unsat{}
		}
                \qquad
		\reluSplit
		\drule{
			\langle x_i, x_j\rangle \in \reluSet, 
			\ \ 
			\lb(x_i)<0,
			\ \ 
			\ub(x_i)>0
		}
		{
			\ub(x_i):= 0 
			\qquad
			\lb(x_i):= 0 
		}
		\medskip

		\reluSuccess
		\drule{
			\forall x\in\allvars. \ 
			\lb(x) \leq \assignment(x) \leq \ub(x), 
			\ \ 
			\forall \langle x,y \rangle \in \reluSet. \
			\assignment(y) = \max{}(0, \assignment(x))
		}
		{
			\sat{}
		}
		\medskip

		\caption{Derivation rules of Reluplex calculus
                  (partial, simplified).}
		\label{fig:Reluplex-calculus}
	\end{centering}
\end{figure}

Internally, Reluplex represents the verification query as a set of
linear equalities and lower/upper bounds over a set of variables, and
a separate set of \relu{} constraints.  A \textit{configuration} of
Reluplex over a set of variables $\allvars$ is either a distinguished
symbol from the set $\{\sat{},\unsat{}\}$, or a tuple
$\langle T, \lb, \ub, \assignment, \reluSet \rangle$, where: $T$, the
\emph{tableau}, contains the set of linear equations; $\lb, \ub$ are
mappings that assign each variable $x\in\allvars$ a lower and an upper
bound, respectively; $\assignment$, the \emph{assignment}, maps each
variable $x\in\allvars$ to a real value; and $\reluSet$ is the set of
\relu{} constraints, i.e. $\langle x, y\rangle\in\reluSet$ indicates
that $y=\relu{}(x)$. Reluplex will often derive \emph{tighter bounds}
as it solves a query; i.e., will discover greater lower bounds or
smaller upper bounds for some of the variables.

Using these definitions, the rules in Fig.~\ref{fig:Reluplex-calculus}
can be interpreted follows: \failure{} is applicable when Reluplex
discovers inconsistent bounds for a variable, indicating that the
query is \unsat{}.  \reluSplit{} is applicable for any \relu{}
constraint whose linear phase is unknown; and it allows Reluplex to
``guess'' a linear phase for that \relu{}, by either setting the upper
bound of its input to $0$ (the inactive case), or the lower bound of
its input to $0$ (the active case).  \reluSuccess{} is applicable when
the current configuration satisfies every constraint, and returns \sat{}.

In order to support \arf{}, we extend the Reluplex calculus with
additional rules, depicted in Fig.~\ref{fig:ar4-calculus}. We use the
context variable $\Gamma$, as before, to store a valid CNF formula to
assist the verifier; and we also introduce two additional context
variables, $ \Gamma_A$ and $\Gamma_B$, for book-keeping
purposes. Specifically, $ \Gamma_A $ stores a mapping between abstract
neurons and their refined neurons; i.e., it is comprised of triples
$ \langle v,v_1,v_2\rangle $, indicating that abstract neuron $v$ has
been refined into neurons $v_1$ and $v_2$.  $ \Gamma_B $ is used for
storing past case splits performed by the verifier, to be used in
populating $\Gamma$ when the verifier finds an \unsat{} branch.
 Given variable $ x $ of neuron $ v $, we use $\guard^{\inc}(\Gamma_A,\Gamma_B,x)$ and $\guard^{\dec}(\Gamma_A,\Gamma_B,x)$ to denote a Boolean
 function that returns true if and only if the guard conditions
 required for applying Theorem~\ref{theorem:prune} hold, for an \inc{}
 or \dec{} neuron $ v $, respectively.

\begin{figure}[ht!]
  \begin{centering}
    \scriptsize
    
	\reluSplit
    \drule{
    	\langle x_i, x_j\rangle \in \reluSet, 
    	\ \ 
    	\lb(x_i)<0,
    	\ \ 
    	\ub(x_i)>0
    }
    {
    	\ub(x_i):= 0 \qquad\qquad \lb(x_i):= 0
    	\\
    	\qquad
    	\Gamma_B:=\Gamma_B\lor r_i
    	\qquad 
    	\Gamma_B:=\Gamma_B\lor \neg r_i
    }
    \medskip
    
    \failure
    \drule{
      \exists x_i\in\allvars.\ l(x_i)>u(x_i)
    }
    {
      \unsat{}, \Gamma:=\Gamma \land \Gamma_B
    } \qquad
    \StepAbstract
    \drule{
      CanAbstract(x_1,x_2)
    }
    {
      \Gamma_A:=\Gamma_A \cup \langle x_{1,2}, x_1,x_2\rangle
    }
    \medskip
    
    \StepRefine
    \drule{
    \Gamma_A \neq \emptyset
    }
    {
    	\Gamma_A := \Gamma_A[:-1]
    }
    \qquad
    \realsuccess
    \drule{
      \sat \land isRealSAT(\Gamma_A)
    }
    {
      \realsat
    }
    
    \medskip 

    \ApplyAbstraction
	\drule{
          true
	}
	{
		Abstract(\Gamma_A), UpdateContext(\Gamma,\Gamma_A,\Gamma_B)
	}
	\medskip
    
	\Prune{1}
	\drule{
		\langle x,x_i,x_j\rangle \in \Gamma_A \land \neg r_i,\neg r_j\in\Gamma_B \land \guard^\inc(\Gamma_A,\Gamma_B,x) \land l(x_i)=0
	}
	{
		u(x_j)=0, \Gamma_B:=\Gamma_B \lor r_j
	}
        
    \medskip 
    
	\Prune{2}
	\drule{
		\langle x,x_i,x_j\rangle \in \Gamma_A \land r_i,r_j\in\Gamma_B \land \guard^\dec(\Gamma_A,\Gamma_B,x) \land u(x_i)=0
	}
	{
		l(x_j)=0, \Gamma_B:=\Gamma_B \lor \neg r_j
	}
	
    \medskip 
        
    \caption{Derivation rules for the $ AR^4 $ calculus.}
    \label{fig:ar4-calculus}
  \end{centering}
\end{figure}

The rules in Fig.~\ref{fig:ar4-calculus} are interpreted as follows.
\StepAbstract is used for merging neurons and creating the
initial, abstract network.
\StepRefine is applicable when dealing with an abstract network (indicated by $\Gamma_A\neq\emptyset$), and performs a refinement step by canceling the last abstraction step. 
\ApplyAbstraction is applicable anytime, and generates an abstract
network according to the information in $\Gamma_A$, updating the relevant contexts correspondingly. 
The \reluSuccess rule from the original Reluplex calculus in included, as is, in the \arf{} calculus; but we note that a \sat{} conclusion that it reaches is applicable only to the current, potentially abstract network, and could thus be spurious. To solve this issue, we add the \realsuccess rule, which checks whether a \sat{} result is true for
the original network as well. Thus, in addition to \sat{} or \unsat{},
the \realsat{} state is also a terminal state for our calculus.

The \failure rule replaces the Reluplex rule with the same name, and
is applicable when contradictory bounds are discovered; but apart from
declaring \unsat{}, it also populates $\Gamma$ with the current
case-split history in $\Gamma_B$, for future pruning of the search
space.  The \reluSplit rule, similarly to the Reluplex version,
guesses a linear phase for one the \relu{}s, but now also records that
action in $\Gamma_B$. Finally, the \Prune{1/2} rules are applicable
when all the conditions of Theorem~\ref{theorem:prune} (for the
\inc{}/\dec{} cases, respectively) are met, and they trim the search tree and update $\Gamma$ accordingly.

\mysubsection{Side Procedures.}  We intuitively describe the
four functions, $ CanAbstract $, $Abstract$, $ UpdateContext $ and
$isRealSat $, which appear in the calculus; additional details can be
found in Appendix~\ref{appendix:ar4-side-procedures}.

\begin{itemize}
\item CanAbstract (Algorithm~\ref{alg:can_abstract},
  Appendix~\ref{appendix:ar4-side-procedures}) checks whether two
  neurons can be merged according to their types; and
  also checks whether the assignment to the variables did not
  change yet during the verification process.
  
\item Abstract (Algorithm~\ref{alg:abstractSequence},
  Appendix~\ref{appendix:ar4-side-procedures}) performs multiple
  abstraction steps. A single abstraction step
  (Algorithm~\ref{alg:abstract step},
  Appendix~\ref{appendix:ar4-side-procedures}) is defined as the
  merging of two neurons of the same type, given that the assignments
  to their variables were not yet changed during verification.
  
\item UpdateContext clears the case-splitting context by setting
  $(\Gamma_B=\emptyset)$, and also updates clauses in $\Gamma$ to use new
  variables: for variables representing \inc{} nodes, $\neg r$ is replaced with $\neg r_1\lor \neg r_2$; and for variables representing \dec{} nodes, $r$ is replaced with $r_1\lor r_2$.
  
\item $ isRealSat $ (Algorithm~\ref{alg:isRealSAT},
  Appendix~\ref{appendix:ar4-side-procedures}) checks whether a
  counterexample holds in the original network.
  
\end{itemize}
	
\mysubsection{Implementation Strategy.} 
The derivation rules in Fig.~\ref{fig:ar4-calculus} define the ``legal
moves'' of \arf{} --- i.e., we are
guaranteed that by applying them, the resulting verifier will be
sound. We now discuss one possible \emph{strategy} for
applying them, which we used in our proof-of-concept
implementation.

We begin by applying \StepAbstract{} to saturation, in
order to reach a small abstract network; and then apply once the
\ApplyAbstraction{} rule, to properly initialize the context variables. 
Then, we enter into the loop of abstraction-based verification: we apply
the Reluplex core rules using existing
strategies~\cite{Katz2019Marabou}, but every time the \reluSplit{}
rule is applied we immediately apply \Prune{1} and \Prune{2}, if they
are applicable.  The \failure{} and \reluSuccess rules are applied as
in Reluplex, and \realsuccess is applied immediately after
\reluSuccess if it is applicable; otherwise, we apply \StepRefine{},
and repeat the process.  We also attempt to apply \Prune{1} and
\Prune{2} after each application of \failure, since it updates
$\Gamma$.
     
\section{Experiments and Evaluation}\label{sec:evaluation}

For evaluation purposes, we created a proof-of-concept implementation
of \arf{}, and compared it to the only tool currently available that
supports CEGAR-based DNN verification --- namely, the extension of
Marabou proposed in~\cite{ElGoKa20}. We used both tools to verify a
collection of properties of the ACAS Xu family of 45 DNNs
(each with 310 neurons, spread across 8 layers) for airborne
collision avoidance~\cite{JuLoBrOwKo2016}. Specifically, we verified a
collection of 4 safety properties and 20 adversarial robustness
properties for these networks, giving a total of 1080 benchmarks; and
from these experiments we collected, for each tool, the runtime
(including instrumentation time), the number of properties
successfully verified within the allotted timeout of two hours, and
the number of case splits performed.  The experiments were conducted
on x86-64 Gnu/Linux based machines using a single Intel(R) Xeon(R)
Gold 6130 CPU @ 2.10GHz core.  Our code is publicly available
online.\footnote{\url{https://drive.google.com/file/d/1onk3dW3yJeyXw8_rcL6wUsVFC1bMYvjL}}

The results of our experiments appear in
Table~\ref{Fig:table-experiments_summary} and Fig.~\ref{fig:graph-experiments_summary}, and demonstrate the
advantages of \arf{} compared to AR. \arf{} timed out on 18.4\% fewer benchmarks,
and solved 188 benchmarks more quickly than AR, compared
to 119 where AR was faster.
We note that in these comparisons, we treated experiments
 in which both tools finished within 5 seconds of each other as ties.
Next, we observe that residual reasoning successfully curtailed the
search space: on average, \arf{} traversed 5.634 states of the search
tree per experiment, compared to 7.178 states traversed by AR --- a
21.5\% decrease.

     \begin{figure}[!ht]
      \centering
      \scalebox{0.365}{
   \begin{tabular}{cc}
     \includegraphics[]{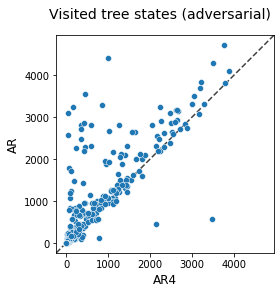} &  
     \includegraphics[]{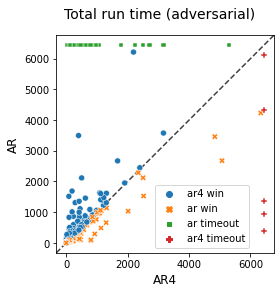}  \\
     \includegraphics[]{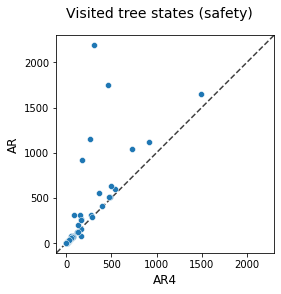} &
     \includegraphics[]{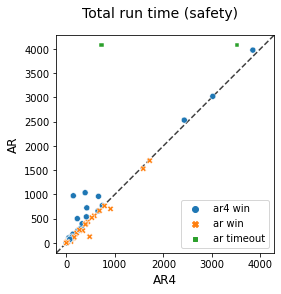} \\
   \end{tabular}
   }
   \caption{Comparing $ AR^4 $ and $ AR $.}
   \label{fig:graph-experiments_summary}
 \end{figure}

\begin{table}[t]
  \caption{Comparing $ AR^4 $ and $ AR $.}
        \label{Fig:table-experiments_summary}
	\centering
        \resizebox{\columnwidth}{!}{%
	\begin{tabular}{ | C{2cm}c | cC{1.5cm} | C{1.5cm}c | cC{1.5cm} | C{1.5cm}c | cC{1.5cm} | C{1.5cm}c | }
            \toprule
          &&& \multicolumn{2}{c}{Adversarial}
          &&& \multicolumn{2}{c}{Safety}
          &&& \multicolumn{2}{c}{Total (Weighted)} & \\

          \cline{4-5}\cline{8-9}\cline{12-13} 
          &&& $ AR^4 $ & AR &&& $ AR^4 $ & AR &&& $ AR^4 $ & AR & \\

          \midrule
          
		Timeouts &&& 95/900 & 116/900 &&& 7/180 & 9/180 &&&
                                                                    102/1080 & 125/1080 & \\ \hline 
		Instances solved more quickly &&& 160 & 95 &&& 28 & 24 &&& 188 & 119 & \\ \hline
		Uniquely solved &&& 26 & 5 &&& 2 & 0 &&& 28 & 5 & \\ \hline
	     	Visited tree states &&& 6.078 & 7.65 &&& 3.569 & 4.98 &&&
                                                                      5.634
                       & 7.178 & \\ \hline
	        Avg. instrumentation time &&& 91.54 & - &&& 36.5 & -
          &&& 82.367 & - & \\ 

          \bottomrule
	\end{tabular}
      }
    \end{table}
 
Despite the advantages it often affords, \arf{} is not always
superior to AR --- because the cost of instrumenting the verifier is
not always negligible.
In our
 experimenters, the verifier spent an average of 82 seconds executing
 our instrumentation code out of an average total runtime of 885
 seconds --- nearly 10\%, which is quite significant.
 In order to mitigate this issue, moving forward we plan to strengthen
 the engineering of our tool, e.g., by improve its implementation of
 unit-propagation through the use of watch literals~\cite{BiHeVa09}.

 \section{Related Work}\label{sec:related-work} 
 Modern DNN verification schemes leverage principles from
 SAT and SMT solving~\cite{Katz2019Marabou,
   Katz2017Reluplex, HuKwWaWu17, Ehlers2017Planet, NaKaRySaWa17},
 mixed integer linear programming~\cite{TjXiTe17, BuTu2017,
   DuJhSaTi18, Ehlers2017Planet}, abstract
 interpretation~\cite{GeMiDrTsChVe18, WaZhXuLiJaHsKo21, SiGePuVe19,
   MuMaSiPuVe22}, and others. Many of these approaches apply
 case-splitting, and could benefit from residual
 reasoning.

 Abstraction-refinement techniques are known to be highly beneficial
 in verifying hand-crafted systems~\cite{ClGrJhLuVe00CEGAR}, and recently
 there have been promising attempts to apply them to DNN verification
 as well~\cite{ElGoKa20, AsHaKrMu20, PrAf20}. As far as we know,
 ours is the first attempt to apply residual reasoning in this context.
 
 \section{Conclusion}\label{sec:conclusion}
 As DNNs are becoming increasingly integrated into safety-critical
 systems, improving the scalability of DNN verification is
 crucial. Abstraction-refinement techniques could play a significant
 part in this effort, but they can sometimes create redundant work for
 the verifier. The residual reasoning technique that we propose can
 eliminate some of this redundancy, resulting in a speedier
 verification procedure. We regard our work here as another step
 towards tapping the potential of abstraction-refinement methods in
 DNN verification.

 Moving forward, we plan to improve the
 engineering of our \arf{} tool; and to integrate it with other
 abstraction-refinement DNN verification techniques~\cite{AsHaKrMu20}.

 \medskip
 \noindent
 \textbf{Acknowledgments.}  This work was supported by ISF grant 683/18.  We thank Jiaxiang Liu and
 Yunhan Xing for their insightful comments about this work.

 {
   \newpage

\begin{thebibliography}{10}

\bibitem{ANGELOV2020185}
P.~Angelov and E.~Soares.
\newblock {Towards explainable deep neural networks (xDNN)}.
\newblock {\em Neural Networks}, 130:185--194, 2020.

\bibitem{AsHaKrMu20}
P.~Ashok, V.~Hashemi, J.~Kretinsky, and S.~M\"{u}hlberger.
\newblock {DeepAbstract: Neural Network Abstraction for Accelerating
  Verification}.
\newblock In {\em Proc. 18th Int. Symposium on Automated Technology for
  Verification and Analysis (ATVA)}, pages 92--107, 2020.

\bibitem{AzCoPa20}
S.~Azzopardi, C.~Colombo, and G.~Pace.
\newblock {A Technique for Automata-based Verification with Residual
  Reasoning}.
\newblock In {\em Proc. 8th Int. Conf. on Model-Driven Engineering and Software
  Development (MODELSWARD)}, pages 237--248, 2020.

\bibitem{BaLiJo21}
S.~Bak, C.~Liu, and T.~Johnson.
\newblock {The Second International Verification of Neural Networks Competition
  (VNN-COMP 2021): Summary and Results}, 2021.
\newblock Technical Report. \url{http://arxiv.org/abs/2109.00498}.

\bibitem{BiHeVa09}
A.~Biere, M.~Heule, and H.~van Maaren.
\newblock {\em {Handbook of Satisfiability}}.
\newblock IOS Press, 2009.

\bibitem{BoDeDwFiFlGoJaMoMuZhZhZhZi16}
M.~Bojarski, D.~Del~Testa, D.~Dworakowski, B.~Firner, B.~Flepp, P.~Goyal,
  L.~Jackel, M.~Monfort, U.~Muller, J.~Zhang, X.~Zhang, J.~Zhao, and K.~Zieba.
\newblock {End to End Learning for Self-Driving Cars}, 2016.
\newblock Technical Report. \url{http://arxiv.org/abs/1604.07316}.

\bibitem{BuTu2017}
R.~Bunel, I.~Turkaslan, P.~Torr, P.~Kohli, and M.~Kumar.
\newblock {Piecewise Linear Neural Network verification: A Comparative Study},
  2017.
\newblock Technical Report. \url{http://arxiv.org/abs/1711.00455}.

\bibitem{ClGrJhLuVe00CEGAR}
E.~Clarke, O.~Grumberg, S.~Jha, Y.~Lu, and H.~Veith.
\newblock {Counterexample-Guided Abstraction Refinement}.
\newblock In {\em Proc. 12th Int. Conf. on Computer Aided Verification (CAV)},
  pages 154--169, 2000.

\bibitem{Dantzig1963}
G.~Dantzig.
\newblock {\em {Linear Programming and Extensions}}.
\newblock Princeton University Press, 1963.

\bibitem{devlin-etal-2019-bert}
J.~Devlin, M.-W. Chang, K.~Lee, and K.~Toutanova.
\newblock {BERT: Pre-training of Deep Bidirectional Transformers for Language
  Understanding}, 2018.
\newblock Technical Report. \url{http://arxiv.org/abs/1810.04805}.

\bibitem{DuJhSaTi18}
S.~Dutta, S.~Jha, S.~Sanakaranarayanan, and A.~Tiwari.
\newblock {Output Range Analysis for Deep Neural Networks}.
\newblock In {\em Proc. 10th NASA Formal Methods Symposium (NFM)}, pages
  121--138, 2018.

\bibitem{Ehlers2017Planet}
R.~Ehlers.
\newblock {Formal Verification of Piece-Wise Linear Feed-Forward Neural
  Networks}.
\newblock In {\em Proc. 15th Int. Symp. on Automated Technology for
  Verification and Analysis (ATVA)}, pages 269--286, 2017.

\bibitem{ElGoKa20}
Y.~Elboher, J.~Gottschlich, and G.~Katz.
\newblock {An Abstraction-Based Framework for Neural Network Verification}.
\newblock In {\em Proc. 32nd Int. Conf. on Computer Aided Verification (CAV)},
  pages 43--65, 2020.

\bibitem{GeMiDrTsChVe18}
T.~Gehr, M.~Mirman, D.~Drachsler-Cohen, E.~Tsankov, S.~Chaudhuri, and
  M.~Vechev.
\newblock {AI2: Safety and Robustness Certification of Neural Networks with
  Abstract Interpretation}.
\newblock In {\em Proc. 39th IEEE Symposium on Security and Privacy (S\&P)},
  2018.

\bibitem{GoodBengCour16}
I.~Goodfellow, Y.~Bengio, and A.~Courville.
\newblock {\em {Deep Learning}}.
\newblock MIT Press, 2016.

\bibitem{heZaReSu2015deep}
K.~He, X.~Zhang, S.~Ren, and J.~Sun.
\newblock {Deep Residual Learning for Image Recognition}.
\newblock In {\em Proc. IEEE Conf. on Computer Vision and Pattern Recognition
  (CVPR)}, pages 770--778, 2016.

\bibitem{HuKwWaWu17}
X.~Huang, M.~Kwiatkowska, S.~Wang, and M.~Wu.
\newblock {Safety Verification of Deep Neural Networks}.
\newblock In {\em Proc. 29th Int. Conf. on Computer Aided Verification (CAV)},
  pages 3--29, 2017.

\bibitem{JuLoBrOwKo2016}
K.~Julian, J.~Lopez, J.~Brush, M.~Owen, and M.~Kochenderfer.
\newblock {Policy Compression for Aircraft Collision Avoidance Systems}.
\newblock In {\em Proc. 35th Digital Avionics Systems Conf. (DASC)}, pages
  1--10, 2016.

\bibitem{Katz2017Reluplex}
G.~Katz, C.~Barrett, D.~Dill, K.~Julian, and M.~Kochenderfer.
\newblock {Reluplex: An Efficient SMT Solver for Verifying Deep Neural
  Networks}.
\newblock In {\em Proc. 29th Int. Conf. on Computer Aided Verification (CAV)},
  pages 97--117, 2017.

\bibitem{Katz2019Marabou}
G.~Katz, D.~Huang, D.~Ibeling, K.~Julian, C.~Lazarus, R.~Lim, P.~Shah,
  S.~Thakoor, H.~Wu, A.~Zelji\'c, D.~Dill, M.~Kochenderfer, and C.~Barrett.
\newblock {The Marabou Framework for Verification and Analysis of Deep Neural
  Networks}.
\newblock In {\em Proc. 31st Int. Conf. on Computer Aided Verification (CAV)},
  2019.

\bibitem{KiKiKiKiKi2019}
B.~Kim, H.~Kim, K.~Kim, S.~Kim, and J.~Kim.
\newblock {Learning Not to Learn: Training Deep Neural Networks With Biased
  Data}.
\newblock In {\em Proc. IEEE Conf. on Computer Vision and Pattern Recognition
  (CVPR)}, pages 9004--9012, 2019.

\bibitem{Liu2021AlgsVNN}
C.~Liu, T.~Arnon, C.~Lazarus, C.~Barrett, and M.~Kochenderfer.
\newblock {Algorithms for Verifying Deep Neural Networks}, 2020.
\newblock Technical Report. \url{http://arxiv.org/abs/1903.06758}.

\bibitem{MuMaSiPuVe22}
M.~M\"uller, G.~Makarchuk, G.~Singh, M.~P\"uschel, and M.~Vechev.
\newblock {PRIMA: General and Precise Neural Network Certification via Scalable
  Convex Hull Approximations}.
\newblock In {\em Proc. 49th ACM SIGPLAN Symposium on Principles of Programming
  Languages (POPL)}, 2022.

\bibitem{NaKaRySaWa17}
N.~Narodytska, S.~Kasiviswanathan, L.~Ryzhyk, M.~Sagiv, and T.~Walsh.
\newblock {Verifying Properties of Binarized Deep Neural Networks}, 2017.
\newblock Technical Report. \url{http://arxiv.org/abs/1709.06662}.

\bibitem{PrAf20}
P.~Prabhakar and Z.~Afzal.
\newblock {Abstraction based Output Range Analysis for Neural Networks}, 2020.
\newblock Technical Report. \url{http://arxiv.org/abs/2007.09527}.

\bibitem{SiGePuVe19}
G.~Singh, T.~Gehr, M.~Puschel, and M.~Vechev.
\newblock {An Abstract Domain for Certifying Neural Networks}.
\newblock In {\em Proc. 46th ACM SIGPLAN Symposium on Principles of Programming
  Languages (POPL)}, 2019.

\bibitem{SoKiPaShLe2020}
H.~Song, M.~Kim, D.~Park, Y.~Shin, and J.-G. Lee.
\newblock {End to End Learning for Self-Driving Cars}, 2020.
\newblock Technical Report. \url{http://arxiv.org/abs/2007.08199}.

\bibitem{TjXiTe17}
V.~Tjeng, K.~Xiao, and R.~Tedrake.
\newblock {Evaluating Robustness of Neural Networks with Mixed Integer
  Programming}, 2017.
\newblock Technical Report. \url{http://arxiv.org/abs/1711.07356}.

\bibitem{WaPeWhYaJa18}
S.~Wang, K.~Pei, J.~Whitehouse, J.~Yang, and S.~Jana.
\newblock {Formal Security Analysis of Neural Networks using Symbolic
  Intervals}.
\newblock In {\em Proc. 27th USENIX Security Symposium}, 2018.

\bibitem{WaZhXuLiJaHsKo21}
S.~Wang, H.~Zhang, K.~Xu, X.~Lin, S.~Jana, C.-J. Hsieh, and Z.~Kolter.
\newblock {Beta-CROWN: Efficient Bound Propagation with Per-Neuron Split
  Constraints for Complete and Incomplete Neural Network Verification}.
\newblock In {\em Proc. 35th Conf. on Neural Information Processing Systems
  (NeurIPS)}, 2021.

\bibitem{WuOzZeIrJuGoFoKaPaBa20}
H.~Wu, A.~Ozdemir, A.~Zelji\'c, A.~Irfan, K.~Julian, D.~Gopinath, S.~Fouladi,
  G.~Katz, C.~P\u{a}s\u{a}reanu, and C.~Barrett.
\newblock {Parallelization Techniques for Verifying Neural Networks}.
\newblock In {\em Proc. 20th Int. Conf. on Formal Methods in Computer-Aided
  Design (FMCAD)}, pages 128--137, 2020.

\bibitem{WuZeKaBa22}
H.~Wu, A.~Zelji\'c, K.~Katz, and C.~Barrett.
\newblock {Efficient Neural Network Analysis with Sum-of-Infeasibilities}.
\newblock In {\em Proc. 28th Int. Conf. on Tools and Algorithms for the
  Construction and Analysis of Systems (TACAS)}, pages 143--163, 2022.

\bibitem{JPhCS1168b2022Y2019}
X.~Ying.
\newblock {An Overview of Overfitting and its Solutions}.
\newblock {\em Journal of Physics Conference Series}, 1168:022022, 2019.

\end{thebibliography}

 }

 \newpage
 \setcounter{section}{0}
\def\thesection{\Alph{section}}

\noindent
{\huge{Appendix}}

\section{High-Level Overview of the CEGAR Process}\label{appendix:CEGAR}
	
	\begin{figure}
		\centering
		\begin{tikzpicture}[node distance=2cm]
			\node (origsys) [io] {$ N $};
			\node (abstract) [process, above of=origsys, yshift=0cm] {abstract};
			\node (abssys) [io, right of=abstract, xshift=0.6cm] {$ N' $};
			\node (verify) [process, right of=abssys, xshift=0.6cm] {verify};
			\node (isunsat) [decision, below right of=verify, yshift=-0.5cm, xshift=1.2cm] {is unsat};
			\node (origunsat) [startstop, left of=isunsat, xshift=-0.5cm] {unsat};
			\node (countex) [io, below of=isunsat, yshift=-0.3cm] {ce};
			\node (verifyex) [process, left of=countex, xshift=-0.6cm] {check at $ N $};
			\node (istrueex) [decision, left of=verifyex, xshift=-0.6cm] {is sat};
			\node (sat) [startstop, left of=istrueex, xshift=-0.6cm] {sat};
			\node (refine) [process, above of=istrueex, yshift=0.3cm] {refine};
			\draw [arrow] (origsys) -- (abstract);
			\draw [arrow] (abstract) -- (abssys);
			\draw [arrow] (abssys) -- (verify);
			\draw [arrow] (verify) -| (isunsat);
			\draw [arrow] (isunsat) -- node[above] {yes} (origunsat);
			\draw [arrow] (isunsat) -- node[right] {no} (countex);
			\draw [arrow] (countex) -- (verifyex);
			\draw [arrow] (verifyex) -- (istrueex);
			\draw [arrow] (istrueex) -- node[above] {yes} (sat);
			\draw [arrow] (istrueex) -- node[right] {no} (refine);
			\draw [arrow] (refine) -- (abssys);
		\end{tikzpicture}
		\captionof{figure}{DNN verification with abstraction-refinement.}
		\label{fig:AR-mechanism}
	\end{figure}
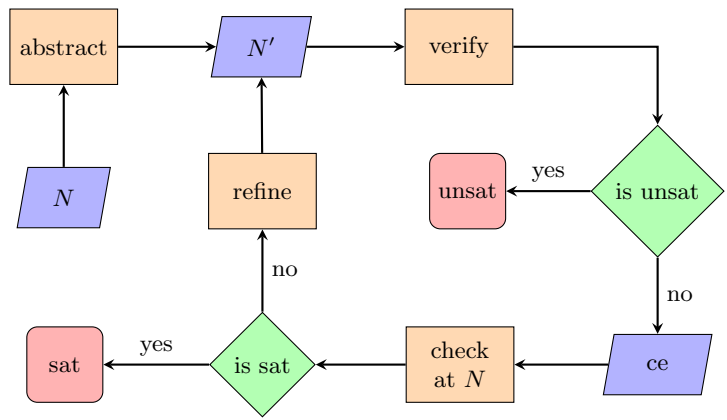

	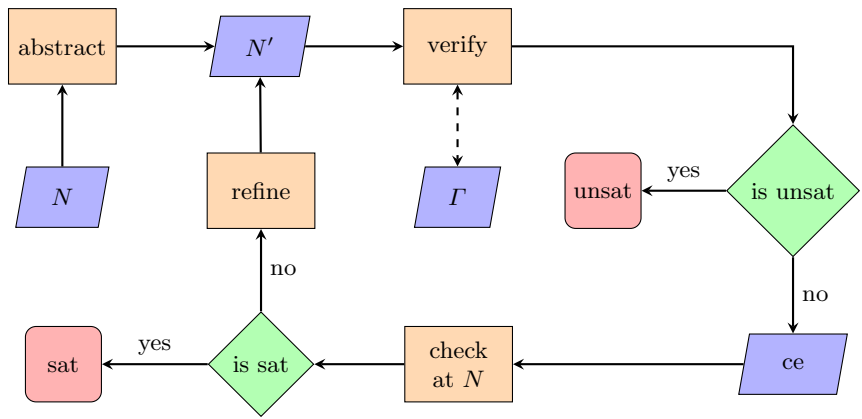
\begin{figure}
		\centering
		\begin{tikzpicture}[node distance=2cm]
			\node (origsys) [io] {$ N $};
			\node (abstract) [process, above of=origsys, yshift=0cm] {abstract};
			\node (abssys) [io, right of=abstract, xshift=0.6cm] {$ N' $};
			\node (verify) [process, right of=abssys, xshift=0.6cm] {verify};
			\node (gamma) [io, below of=verify, yshift=0cm] {$ \Gamma $};
			\node (isunsat) [decision, below right of=verify, yshift=-0.5cm, xshift=3cm] {is unsat};
			\node (origunsat) [startstop, left of=isunsat, xshift=-0.5cm] {unsat};
			\node (countex) [io, below of=isunsat, yshift=-0.3cm] {ce};
			\node (verifyex) [process, left of=countex, xshift=-2.4cm] {check at $ N $};
			\node (istrueex) [decision, left of=verifyex, xshift=-0.6cm] {is sat};
			\node (sat) [startstop, left of=istrueex, xshift=-0.6cm] {sat};
			\node (refine) [process, above of=istrueex, yshift=0.3cm] {refine};
			\draw [arrow] (origsys) -- (abstract);
			\draw [arrow] (abstract) -- (abssys);
			\draw [arrow] (abssys) -- (verify);
			\draw [arrow] (verify) -| (isunsat);
			\draw [arrow] (isunsat) -- node[above] {yes} (origunsat);
			\draw [arrow] (isunsat) -- node[right] {no} (countex);
			\draw [arrow] (countex) -- (verifyex);
			\draw [arrow] (verifyex) -- (istrueex);
			\draw [arrow] (istrueex) -- node[above] {yes} (sat);
			\draw [arrow] (istrueex) -- node[right] {no} (refine);
			\draw [arrow] (refine) -- (abssys);
			\draw [arrow, dashed, <->] (verify) -- (gamma);
		\end{tikzpicture}
		\captionof{figure}{DNN verification with residual reasoning.}
		\label{fig:AR-RR-mechanisms}
	\end{figure}
	
	\section{RR-Based Pruning Theorem}\label{appendix:formalization}
	\subsection{Lemma \ref{abstract min positive->refinement positive}: Proof}
	
	We add here the proof for the symmetric lemma regarding \textit{dec} Neurons.
	\begin{proof}\label{proof:abstract min positive->refinement positive}
		assume the two right nodes in Figure \ref{fig:refinednetwork} are the refined nodes of the right node in the right of Figure \ref{fig:abstractnetwork}. we have to prove that the following implication holds:
		$ x_1 \cdot min(a,b) + x_2 \cdot min(c,d) > 0 \Rightarrow (x_1 \cdot a + x_2 \cdot c > 0 \lor x_1 \cdot b + x_2 \cdot d > 0) $
		
		actually the values of $ x_1,x_2 $ are non-negative so we can split into 4 cases:
		\begin{enumerate}
			\item $ x_1=0, x_2=0 $ implication trivially holds
			\item $ x_1=0, x_2>0 $ in this case $ x_2 \cdot min(c,d) >0 $ so $ c,d>0 $ so both $ x_1 \cdot a+x_2 \cdot c=x_2 \cdot c>0 $ and $ x_1 \cdot b+x_2 \cdot d=x_2 \cdot d>0 $ and implication holds.
			\item $ x_1>0, x_2=0 $ in this case $ x_1 \cdot min(a,b)>0 $ so $ a,b>0 $ so both $ x_1 \cdot a+x_2 \cdot c=x_1 \cdot a>0 $ and $ x_1 \cdot b+x_2 \cdot d=x_1 \cdot b>0 $ and implication holds.
			\item If $ x_1>0, x_2>0 $ the implication is $ min(x_1 \cdot a,x_1 \cdot b) + min(x_2 \cdot c,x_2 \cdot d) > 0 \Rightarrow (x_1 \cdot
			a + x_2 \cdot c > 0 \lor x_1 \cdot b + x_2 \cdot d > 0) $.
			We can denote $ a'=x_1 \cdot a,\ b'=x_1\cdot b,\ c'=x_2\cdot c,\ d'=x_2\cdot d $ and we get the following lemma: 
			$ min(a',b')+min(c',d') > 0 \Rightarrow a'+c'>0 \lor b'+d'>0 $ (which is exactly the original lemma when $ x_1=1,x_2=1 $).
			\begin{itemize}
				\item If $ a' \le b' $, so $ a'=min(a',b') $ so by assumption $ a'+min(c',d') > 0 $, so $ b'+d' \ge a' + d' \ge a'+min(c',d') > 0 $
				\item If $ a' > b' $, so $ b'=min(a',b') $ so by assumption $ b'+min(c',d') > 0 $, so $ a'+c' > b' + c' > b' + min(c',d') > 0 $
			\end{itemize}
			So we get that there is a violation in at least one of the nodes in the refined network, and the implication does hold.
		\end{enumerate}
	\end{proof}
	
	\subsection{RR-Based Pruning Theorem}
	The detailed theorem and proof regarding $ \Gamma $ maintenance and derivations are supplied here.
		
	Assuming that $ \Gamma $ was maintained correctly in previous steps of the process of verification, one cannot derive unsatisfiability from $ \Gamma $ after refinement take place, since the abstract and refined networks are (similar but) not equivalent.
		
	The question is, therefore: What is needed to guarantee correct derivation (after refinement) from one clause in $ \Gamma $ to it's \textit{refined clause}? The \textit{refined clause} is the clause where the abstract node is replaced by its refined neurons in every case split in the clause. For example, in the running example, a case split $ v_{2,1+2}=active $ is replaced by two case splits: $ v_{2,1}=active,\ v_{2,2}=active $.
	
	We prove that the \textit{guards} whose conjunction guarantees the correctness of the implication are three: (i) equivalence (between abstract and refined networks) of all activations in all neurons in the preceding layers (including the abstract layer neurons, except the abstract neuron), (ii) equivalence (between abstract and refined networks) of activation to the abstract and refined neurons, and (iii) active/inactive case split for every $ \inc$/$\dec $ neuron (resp.) in the consecutive layers in both abstract and refined network.
	
	The guard condition $\guard{}$ is based on $\Gamma$'s clauses,
        as well as information gathered during the refinement process;
        specifically, the mapping between \relu{} nodes of the
        abstract and refined networks, and information regarding the
        earliest layer where an abstraction step has been
        performed. The guard condition is a conjunction of two parts:
        the first part is extracted from $ \Gamma $, and ensures that for every
        clause, for an \inc{} neuron, the negation of the variable of
        the abstract neuron is replaced with the disjunction of the
        two negated variables of its refined neurons. Similarly, for a \dec{}
        neuron, the variable of the abstract neuron is replaced with
        the disjunction of the two variables of its refined
        neurons. The second part of the guard condition is a
        conjunction of constraints that guarantee that each 
        \inc{} neuron is active, and that each \dec{} neuron is
        inactive --- if these neurons are located in a layer that
        appears after the earliest layer in which abstraction
        was performed. 
	
	In order to formalize the answer, some sets of constraints are to be defined. Suppose we are given a property $ \phi $ of the form $ c_1\le x \le c_2 \rightarrow y > c_3 $ for some constants $ c_1,c_2,c_3 $, and a triple of original, abstract and refined networks $ N, N', N'' $ resp, such that $ N'' $ refined from $ N' $ with 1 refinement step on node $ u $ in layer $ L_r $. Denote:
	\begin{enumerate}
		\item $ C_{pre}:= \bigwedge\limits_{p\in P} \{p=active/inactive\} $ where $ P \subseteq \{L_1\cup\dots \cup L_r\setminus u\} $. 
		
		$ C_{pre} $ include case splits of neurons in preceding layers (including the abstract layer, except the abstract neuron). Each neuron $ p\in P $ can be active/inactive.
		\item $ C^{inc}_{abs}:= \{u=active\} $ if $ u $ is an increasing node else $ \emptyset $
		\item $ C^{dec}_{abs}:= \{u=inactive\} $ if $ u $ is a decreasing node else $ \emptyset $
		\item $ C^{inc}_{ref}:= \{u_1=active, u_2=active\} $ if $ u $ is an increasing node else $ \emptyset $
		\item $ C^{dec}_{ref}:= \{u_1=inactive, u_2=inactive\} $ if $ u $ is a decreasing node else $ \emptyset $
		\item $ C^{inc}_{post}:= \bigwedge\limits_{q\in Q^{inc}} \{q = active\} $ where $ Q^{inc} \subseteq \{L_r\cup\dots \cup L_|N|\} $ is a set of all increasing nodes in layers $ L_{r+1},\dots, L_{|N|} $.
		
		$ C^{inc}_{post}$ requires that every $ \inc $ neuron in the consecutive layers after the abstract node will be \textit{active}.
		\item $ C^{dec}_{post}:= \bigwedge\limits_{q\in Q^{dec}} \{q = inactive\} $ where $ Q^{dec} \subseteq \{L_r\cup\dots \cup L_|N|\} $ is a set of all decreasing nodes in layers $ L_{r+1},\dots, L_{|N|} $.
		
		$ C^{dec}_{post}$ requires that every $ \dec $ neuron in the consecutive layers after the abstract node will be \textit{inactive}.
	\end{enumerate}
	Using these definitions, the 3 guards above are defined as follows:
	\begin{itemize}
		\item $ N'_{C_{pre}}=N''_{C_{pre}} $ implements guard (i).
		\item$ N'_{C^{inc}_{abs}} + N'_{C^{dec}_{abs}} = N''_{C^{inc}_{ref}} + N''_{C^{dec}_{ref}} $ implements guard (ii).
		\item $ (N'_{C^{inc}_{post}}+N'_{C^{dec}_{post}}) \land (N''_{C^{inc}_{post}}+N''_{C^{dec}_{post}}) $ implements guard (iii).
	\end{itemize}
	
	We should prove, given that all guards holds, the correctness of implication of unsatisfiability in refinement from unsatisfiability in abstraction. We prove that by finding the violated case split in the refinement.
	
	Notice that a possible solution from the verifier is an input $ \vec{x} $ that (when propagated in the abstract network) induces and a set of case splits $ \{s_i\}_{i=1}^{n} $, hence the solution can be treated as a couple $ (\vec{x}, \{s_i\}_{i=1}^{n}) $.
		
	\begin{theorem}\label{appendix-theorem:prune}
		If all 3 guards hold, then for every solution $ (\vec{x}, \{s_i\}_{i=1}^{n}) $, if any constraint $ s\in \{s_i\}_{i=1}^{n} $ is violated in $ N' $, there is a corresponding constraint which is violated in $ N'' $.
	\end{theorem}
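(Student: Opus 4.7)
The plan is to perform a case analysis on the location of the violated constraint $s$ within $N'$, and in each case exhibit a corresponding violated constraint in $N''$ under the same input $\vec{x}$. I would partition the neurons of $N'$ into three regions: (a) neurons lying in layers preceding $L_r$, or sitting in $L_r$ but distinct from the abstract neuron $u$; (b) the abstract neuron $u$ itself; and (c) neurons lying in layers after $L_r$.

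For case (a), guard (i)---the equality $N'_{C_{pre}} = N''_{C_{pre}}$---ensures that the pre-activation value of the neuron $p$ in question under $\vec{x}$ is identical in $N'$ and in $N''$, because the prior case-split choices agree and the corresponding sub-networks coincide. Hence $s$ itself is violated in $N''$ and serves as its own corresponding refined constraint.

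For case (b), which is the heart of the argument, assume $u$ is an \inc{} neuron and $s$ asserts $u = \text{active}$; violation of $s$ in $N'$ means that the pre-activation of $u$ under $\vec{x}$ is negative. Guard (ii) ensures that the inputs feeding $u$ in $N'$ coincide with those feeding $u_1, u_2$ in $N''$, so Lemma~\ref{abstract max negative->refinement negative} applies and at least one of $u_1, u_2$ has a negative pre-activation. Under the refined clause, the literal $r_u$ is replaced by $r_{u_1}\wedge r_{u_2}$, so at least one of the refined constraints $u_j = \text{active}$ is violated in $N''$. A symmetric appeal to Lemma~\ref{abstract min positive->refinement positive} handles the \dec{} case where $s$ asserts $u = \text{inactive}$.

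For case (c), I would invoke guard (iii), which forces every post-$L_r$ \inc{} neuron to be active and every post-$L_r$ \dec{} neuron to be inactive in both networks. Under this uniform phase fixing every downstream \relu{} behaves linearly, and the underlying abstraction's monotonicity---an \inc{} neuron in $N'$ dominates its counterpart in $N''$ while a \dec{} neuron is dominated---propagates layer by layer to yield a sign-consistent inequality on the pre-activation of the target neuron $q$, which suffices to transport the violation. I expect the main obstacle to be exactly this propagation: carefully tracking the sign through successive linearized layers and verifying that at each step the \inc{}/\dec{} classification remains consistent with the inequality direction. Guard (iii) is essential here, because without pinning every downstream \relu{} to a fixed phase the composition of monotonicity across nonlinear activations would not obviously preserve the sign information needed to carry the violation over from $N'$ to $N''$.
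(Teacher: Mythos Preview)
Your proposal is correct and follows essentially the same case analysis as the paper's proof. Two minor differences worth noting: the paper also singles out input constraints and the output constraint $y>c$ as separate (trivial) cases, which your three-way partition does not explicitly cover; and in your case (c) the paper proceeds more directly---it first disposes of the subcase where some earlier ($C_{IN},C_{OUT},C_{pre},C_{abs}$) constraint is already violated, and then for the remaining subcase invokes the abstraction's monotonicity (``after refinement an \inc{} neuron's value decreases and a \dec{} neuron's value increases'') in a single line, rather than unrolling a layer-by-layer linearized propagation. Your propagation argument is a legitimate way to substantiate that one-line appeal under guard~(iii), so the overall strategy is the same.
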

	\begin{proof}
		The violation can occur in any neuron in the abstract network, and we handle all the options by splitting the neurons to input layer, preceding layers, abstract neuron, consecutive layers, output layer.
		\begin{enumerate}
			\item{If $ s $ is an input constraint (denoted as $ C_{IN} $), $ s $ is also violated at $ N'' $}.
			\item{If $ s $ is an output constraint (denoted as $ C_{OUT} $), $ s $ is also violated at $ N'' $ because the output of the refinement is smaller (and $ y''<y'<c $)}.
			\item {If $ s \in N'_{C_{pre}} $, then from guard (i) we get that $ s \in N''_{C_{pre}}$, hence $ s $ is violated in $ N'' $ because the values in the preceding layers are equal.}
			\item {If $ s\in N'_{C^{inc}_{abs}}+N'_{C^{dec}_{abs}} $ then from guard (ii) we get that $ s \in N''_{C^{inc}_{ref}}+N''_{C^{dec}_{ref}} $.
				\subitem {if $ u $ is \textit{Inc} and \textit{active} then violation means $ u<0 $, which implies that $ u_1<0 \lor u_2<0 $ (from Lemma \ref{abstract max negative->refinement negative}) so some $ s_1\in N''_{C^{inc}_{ref}} $ is violated in $ N'' $ }.
				\subitem {if $ u $ is \textit{Dec} and \textit{inactive} then violation means $ u>0 $, which implies that $ u_1>0 \lor u_2>0 $ (from Lemma \ref{abstract min positive->refinement positive}) so some $ s_2 \in N''_{C^{dec}_{ref}} $ is violated in $ N'' $.}}
			\item {if $ s\in N'_{C^{inc}_{post}}+N'_{C^{dec}_{post}} $ then from guard (iii) we get that $ s\in N''_{C^{inc}_{post}}+N''_{C^{dec}_{post}} $} (since guard (iii) also implies that $ N'_{C^{inc}_{post}}+N_{C^{dec}_{post}} = N''_{C^{inc}_{post}}+N''_{C^{dec}_{post}} $)
				\subitem {if any constraint in $ C_{IN}, C_{OUT},C_{pre},C_{abs} $ is violated, we already shown that a corresponding constraint is violated}
				\subitem {otherwise, denote the violated constraint's neuron with $ p $}
				\subsubitem {if $ p $ is an $ \inc $ neuron, then the violation is that $ p<0 $. After refinement $ p $ decreases so still $ p<0 $ and $ s $ is violated again.}
				\subsubitem {if $ p $ is a $ \dec $ neuron, then the violation is that $ p>0 $. After refinement $ p $ increases so still $ p>0 $ and $ s $ is violated again.}
		\end{enumerate}
	\end{proof}
	
	\section{\arf{} Side Procedures: Additional Details}\label{appendix:ar4-side-procedures}
	The calculus in Figure \ref{fig:ar4-calculus} uses some procedures whose pseudo-code is given at this part. Algorithm \ref{alg:can_abstract} checks if 2 neurons can be abstracted into one by their type, Algorithm \ref{alg:abstract step} update Reluplex variables, basis, valuation function and bounds, unify 2 neurons and update the tableau. Algorithm \ref{alg:abstractSequence} just apply sequence of AbstractStep procedures. Algorithm \ref{alg:UnifyBasic} unifies two variables which are known to be part of the basis, by replacing their 2 rows by 1 row of the unified neuron variable in the tableau (a data structure used in Reluplex, and maintains the data regarding variables values, bounds, etc.). Algorithm \ref{alg:isRealSAT} validate counter example in the original network.
	
	\begin{algorithm}[ht!]
		\caption{CanAbstract($ v_1, v_2 $)}
		\begin{algorithmic}[1]
			\label{alg:can_abstract}
			\IF {$ layer(v_1) = layer(v_2) $ \\
				\hskip\algorithmicindent\AND $ pos(v_1) \leftrightarrow pos(v_2) $ \\
				\hskip\algorithmicindent\AND $ inc(v_1) \leftrightarrow inc(v_2) $ \\
				\hskip\algorithmicindent\AND $\{v_1^b, v_1^f, v_2^b, v_2^f\} \subseteq B $ \\
				\hskip\algorithmicindent\AND $\forall v\in\{v_1^b,v_1^f,v_2^b,v_2^f\}: l(v)= -\infty $ \\
				\hskip\algorithmicindent\AND $\forall v\in\{v_1^b,v_1^f,v_2^b,v_2^f\}: u(v)=\infty $}
			\RETURN True
			\ENDIF 
			\RETURN False
		\end{algorithmic}
		\label{alg:canAbstract}
	\end{algorithm}
	
	\begin{algorithm}[ht!]
		\caption{AbstractStep($  X,T,\alpha, l, u, B, v_1, v_2 $)}
		\begin{algorithmic}[1]
			\STATE $ X:=X\cup\{v_{1,2}^b,v_{1,2}^f\}\setminus\{v_1^b,v_1^f,v_2^b,v_2^f\} $
			\STATE $ B := B\cup \{v_{1,2}^b,v_{1,2}^f\} \setminus \{v_1^b,v_1^f,v_2^b,v_2^f\}$
			\STATE $ \alpha := \alpha \cup \{v_{1,2}^b=0,v_{1,2}^f=0\}\setminus \{v_1^b=0,v_1^f=0, v_2^b=0, v_2^f=0\}$
			\STATE $ l:=l \cup \{v_{1,2} > -\infty\} \setminus \{v_1>-\infty,v_2>-\infty\} $
			\STATE $ u:=u \cup \{v_{1,2} < \infty\} \setminus \{v_1<\infty,v_2<\infty\} $
			\STATE $ T:=UnifyBasic(v_{1,2}, (v_1,v_2), T) $
			\RETURN $ X,T,\alpha, l, u, B $
		\end{algorithmic}
		\label{alg:abstract step}
	\end{algorithm}
	
	\begin{algorithm}[ht!]
		\caption{Abstract($ X_0,T_0,\alpha_0, l_0, u_0, B_0,\Gamma_A $)}
		\begin{algorithmic}[1]
			\STATE $ X,T,\alpha, l, u, B = X_0,T_0,\alpha_0, l_0, u_0, B_0 $
			\FOR{$ v_{1,2}, v_1, v_2 \in \Gamma_A $}
			\STATE $ X,T,\alpha, l, u, B $ = abstractStep($ X,T,\alpha, l, u, B, v_1, v_2 $)
			\ENDFOR
			\RETURN $ X,T,\alpha, l, u, B $
		\end{algorithmic}
		\label{alg:abstractSequence}
	\end{algorithm}
	
	\begin{algorithm}[ht!]
		\caption{UnifyBasic($ v_{1,2}, (v_1, v_2), T $)}
		\begin{algorithmic}[1]
			\STATE remove the rows of $ v_1,v_2 $
			\STATE newRowAddAnds = $ [0,\dots,0] $
			\FOR{$ i\in \{0, \dots, len(X)\} $}
			\IF {$ x_i == v_{1,2}$}
			\STATE $ newRowAddAnds[i] = 1 $
			\ELSIF {$ i \in [n] $} 
			\IF {$ inc(x_i) $}
			\STATE $ newRowAddAnds[i] = max(x_i) $
			\ELSE
			\STATE $ newRowAddAnds[i] = min(x_i) $
			\ENDIF
			\ELSE
			\STATE $ newRowAddAnds[i] = 0 $
			\ENDIF
			\ENDFOR
			\STATE append newRowAddAnds to $ T $
		\end{algorithmic}
		\label{alg:UnifyBasic}
	\end{algorithm}
	
	\begin{algorithm}[ht!]
		\caption{IsRealSAT($ \alpha $)}
		\begin{algorithmic}[1]
			\STATE derive $ \alpha_0 $ from $ \alpha $
			\STATE - take only input layer values from $ \alpha $
			\STATE - evaluate in the original network and derive other values of $ X_0 $
			\RETURN {$ \alpha_0 $ satisfies $ T_0 $}
		\end{algorithmic}
		\label{alg:isRealSAT}
	\end{algorithm}
	
	\vfill
	
	
	
\end{document}